\let\proof\@undefined
\let\endproof\@undefined
\newcommand\mySubref[1]{\protect\subref{#1}}
\newcommand\varFont[1]{\textsf{#1}}
\newcommand\varFontNoSize[1]{\textsf{#1}}
\newcommand\valueFont[1]{\texttt{#1}}
\newcommand\valueFontNoSize[1]{\texttt{#1}}
\newcommand\varFontNoSizeSmall[1]{\varFontNoSize{#1}}
\newcommand\valueFontNoSizeSmall[1]{\valueFontNoSize{#1}}
\newtheorem{prob}{Problem}
\newtheorem{fact}{Fact}
\newtheorem{assumption}{Assumption}
\def\cl@chapter{\@elt {theorem}}
\crefname{equation}{}{}
\Crefname{equation}{Equation}{Equations}
\crefname{figure}{Fig.}{Figs.}
\Crefname{figure}{Figure}{Figures}
\crefname{section}{Sec.}{Secs.}
\Crefname{section}{Section}{Sections}
\crefname{subsection}{Sec.}{Secs.}
\Crefname{subsection}{Section}{Sections}
\crefname{subsubsection}{Sec.}{Secs.}
\Crefname{subsubsection}{Section}{Sections}
\crefname{table}{Table}{Tables}
\Crefname{table}{Table}{Tables}
\crefname{prob}{Problem}{Problems}
\Crefname{prob}{Problem}{Problems}
\crefname{algorithm}{Algorithm}{Algorithms}
\Crefname{algorithm}{Algorithm}{Algorithms}
\crefname{assumption}{Assumption}{Assumptions}
\Crefname{assumption}{Assumption}{Assumptions}
\crefname{lemma}{Lemma}{Lemmas}
\Crefname{lemma}{Lemma}{Lemmas}
\crefname{proposition}{Proposition}{Propositions}
\Crefname{proposition}{Proposition}{Propositions}
\definecolor{dark-gray}{gray}{0.5}
\def\distAnchor{\ell_{i}}
\def\primeTraveler{`prime traveler'}
\def\secondaryTraveler{`secondary traveler'}
\def\secondaryTravelers{`secondary travelers'}
\def\anchor{`anchor'}
\def\anchors{`anchors'}
\def\connector{`connector'}
\def\connectors{`connectors'}
\def\explorers{`explorers'}
\def\explorer{`explorer'}
\def\Explorer{`Explorer'}
\def\PickNplaceRobot{\Explorer{}~2} 
\def\pickNplaceRobot{\explorer{}~2} 
\def\CameraRobot{\Explorer{}~1} 
\def\cameraRobot{\explorer{}~1} 
\def\dampingForce{damping force}
\def\connectivityForce{generalized connectivity force}
\def\travelingForce{traveling force}
\def\connectivityAndDampingForce{connectivity and damping force}
\def\travelingAndConnectivityForce{traveling and connectivity force}
\newcounter{noindlistcounter}
\begin{document}

\title{
Decentralized Simultaneous Multi-target Exploration using a Connected Network of Multiple Robots} 


\author{Thomas Nestmeyer \and Paolo Robuffo Giordano \and Heinrich H. B\"ulthoff \and Antonio Franchi}


\newcommand\linkedmail[1]{\href{mailto:#1}{\email{#1}}}

\institute{T.~Nestmeyer 
             - \linkedmail{tnestmeyer@tue.mpg.de}
             \at
             Max Planck Institute for Intelligent Systems,\\ Spemannstra{\ss}e 41, 72076 T\"ubingen, Germany
           \and
           P.~Robuffo~Giordano
             - \linkedmail{prg@irisa.fr}
             \at
             CNRS at Irisa and Inria Rennes Bretagne Atlantique,\\ Campus de Beaulieu, 35042 Rennes cedex, France
           \and
           H.~H.~B\"ulthoff 
             - \linkedmail{hhb@tuebingen.mpg.de}
             \at
             Max Planck Institute for Biological Cybernetics,\\ Spemannstra{\ss}e~38, 72076 T\"ubingen, Germany
          \and
           A.~Franchi 
             - \linkedmail{antonio.franchi@laas.fr}
             \at 
              CNRS, LAAS, 7 Av. du Colonel Roche, F-31400 Toulouse, France\\
              and Univ de Toulouse, LAAS, F-31400 Toulouse, France
\and
Simulations and experiments were performed while the authors were at the MPI for Biological Cybernetics, T\"ubingen, Germany.
}


\maketitle

\begin{abstract}
This paper presents a novel decentralized control strategy for a multi-robot system that enables parallel multi-target exploration while ensuring a time-varying connected topology in cluttered 3D environments. Flexible continuous connectivity is guaranteed by building upon a recent connectivity maintenance method, in which limited range, line-of-sight visibility, and collision avoidance are taken into account at the same time.
Completeness of the decentralized multi-target exploration algorithm is guaranteed by dynamically assigning the robots with different motion behaviors during the exploration task. One major group is subject to a suitable downscaling of the main \travelingForce{} based on the traveling efficiency of the current leader and the direction alignment between \travelingAndConnectivityForce{}. This supports the leader in always reaching its current target and, on a larger time horizon, that the whole team realizes the overall task in finite time.
Extensive Monte~Carlo simulations with a group of several quadrotor UAVs show the scalability and effectiveness of the proposed method and experiments validate its practicability. 
\keywords{multi-robot coordination \and path-planning \and decentralized exploration \and connectivity maintenance} 
\end{abstract}

\section{Introduction}\label{sec:intro}

Success of multi-robot systems is based on their ability of parallelizing the execution of several small tasks composing a larger complex mission such as, for instance,
the inspection of a certain number of locations either generated off- or online during the robot motion (e.g., exploration, data collection, surveillance, large-scale medical supply or search and rescue~\citep{2006-HowParSuk,2009b-FraFreOriVen,2012b-PasFraBul,2008-MurTadNarJacFioChoErk,2014-FaiHol}).
In all these cases, a fundamental difference between a group of many single robots and a multi-robot system is the ability to communicate (either explicitly or implicitly) in order to then cooperate together towards a common objective.
Another distinctive characteristic in multi-robot systems is the
absence of central planning units, as well as all-to-all communication infrastructures, leading to a \emph{decentralized} approach for algorithmic design and implementations~\citep{1997-Lyn}.
While communication of a robot with every other robot in the group (via multiple hops) would still be possible as long as the group stays connected, in a decentralized approach each robot is only assumed able to communicate with the robots in its $1$-hop neighborhood (i.e., typically the ones spatially close by). This brings the advantage of scalability in communication and computation complexity when considering groups of many robots.

The possibility for every robot to share information (via, possibly, multiple hops/iterations) with any other robot in the group
is a basic requirement for typical multi-robot algorithms and, as well-known, it is directly related to the connectivity of the underlying \emph{graph} modeling inter-robot interactions.
Graph connectivity is a prerequisite to properly fuse the information collected by each robot, e.g., for mapping, localization, and for deciding the next actions to be taken. Additionally, many distributed algorithms like consensus~\citep{2004-OlfMur} and flooding~\citep{2001-LimKim}
require a connected graph for their successful convergence. Preserving graph connectivity during the robot motion is, thus, a fundamental requirement; however, connectivity maintenance may not be a trivial task in many situations, e.g., because of limited capabilities of onboard sensing/communication devices which can be hindered by constraints such as occlusions or maximum range.
Given the cardinal role of communication for the successful operation of a multi-robot team, it is then not surprising that a substantial effort has been spent over the last years for devising strategies able to preserve graph connectivity despite constraints in the inter-robot sensing/communication possibilities, see, e.g.,~\cite{2005-AntArrChiSet,2008-StuJadKum,2011-StuMichKumIsl,2012-PeiMut,2013e-RobFraSecBue}. In general, \emph{fixed topology} methods represent conservative strategies that achieve connectivity maintenance by restraining any pairwise link of the interaction graph to be broken during the task execution. 
A different possibility is to aim for \emph{periodical connectivity} strategies, where each robot can remain separated from the group during some period of time for then rejoining when necessary. 
\emph{Continuous connectivity} methods instead try to obtain maximum flexibility (links can be continuously broken and restored unlike in the fixed topology cases) while preserving at any time the fundamental ability for any two nodes in the group to share information via a (possibly multi-hop) path (unlike in periodical connectivity methods).

With respect to this state-of-the-art, the problem tackled by this paper is the design of a \emph{multi-target} exploration/visiting strategy for a team of mobile robots in a cluttered environment able to 
\begin{inparaenum}[\itshape i\upshape)]
\item allow visiting multiple targets at once (for increasing the efficiency of the exploration), while
\item always guaranteeing connectivity maintenance of the group despite some typical sensing/communication constraints representative of real-world situations,
\item without requiring presence of central nodes or processing units (thus, developing a fully \emph{decentralized} architecture), and
\item without requiring that all the targets are known at the beginning of the task (thus, considering \emph{online target generation}).
\end{inparaenum}

Designing a decentralized strategy that combines multi-target exploration and continuous connectivity maintenance is not trivial as these two goals impose often antithetical constraints. 
Several attempts have indeed been presented in the previous literature: a \emph{fixed-topology} and centralized method is presented in~\cite{2005-AntArrChiSet}, which, using a virtual chain of mobile antennas, is able to maintain the communication link between a ground station and a single mobile robot visiting a given sequence of target points. The method is further refined in~\citet{2006-AntArrChiSet}.
A similar problem is addressed in~\citet{2008-StuJadKum} by resorting to a partially centralized method where a linear programming problem is solved at every step of motion in order to mix the derivative of the second smallest eigenvalue of a weighted Laplacian (also known as algebraic connectivity, or Fiedler eigenvalue) and the k-connectivity of the system. A line-of-sight communication model is considered in~\cite{2011-StuMichKumIsl}, where a centralized approach, based on polygonal decomposition of the known environment, is used to address the problem of deploying a group of roving robots while achieving \emph{periodical connectivity}. 
The case of periodical connectivity is also considered in~\cite{2012b-PasFraBul} and~\cite{2012-HolSin}. The first paper optimally solves the problem of patrolling a set of points to be visited as often as possible. The second presents a heuristic algorithm exploiting the concept of implicit coordination. 
\emph{Continuous connectivity} between a group of robots exploring an unknown 2D environment and a single base station is considered in \cite{2010-PeiMutXi_}. The proposed exploration methodology, similar to the one presented in~\cite{2009b-FraFreOriVen}, is integrated with a centralized algorithm running on the base station and solving a variant of the Steiner Minimum Tree Problem. An extension of this approach to heterogeneous teams is presented in~\cite{2012-PeiMut}. \cite{2007-ZavPap} exploit a potential field approach to keep the second smallest eigenvalue of the Laplacian positive. The method is tested with ground robots in an empty environment and assumes that each robot has access to the whole formation for computing the connectivity eigenvalue and the associated potential field. It is therefore not scalable, because the strength of all links has to be broadcasted to all robots in the group. Continuous connectivity achieved by suitable mission planning is described in~\cite{2008-MosMonLag}, although this work does not allow for parallel exploration. Another method providing flexible connectivity based on a spring-damper system, but not able to handle significant obstacles, is reported in~\cite{2010-TarMosRiaVilMon}.

A decentralized strategy addressing the problem of continuous connectivity maintenance for a multi-robot team is considered in~\cite{2013e-RobFraSecBue}. 
In this latter work, the introduction of a sensor-based weighted Laplacian allows to distributively and analytically compute the anti-gradient of a generalized Fiedler eigenvalue. The connectivity maintenance action is further embedded with additional constraints and requirements such as inter-robot and obstacle collision avoidance, and a stability guarantee of the whole system, when perturbed by external control inputs for steering the whole formation, is also provided. Finally, apart for~\cite{2013e-RobFraSecBue}, all the previously mentioned continuous connectivity methods have only been applied to 2D-environment models.

In this work, we leverage upon the general decentralized strategy for connectivity maintenance of~\cite{2013e-RobFraSecBue} for proposing a
solution to the aforementioned problem of decentralized \emph{multi-target exploration} while coping with the (possibly opposing) constraints of continuous connectivity maintenance in a cluttered 3D environment.
The main contributions of this paper and features of the proposed algorithm can then be summarized as follows:
\begin{inparaenum}[\itshape i)]
\item decentralized and continuous maintenance of connectivity,
\item guarantee of collision avoidance with obstacles and among robots, 
\item possibility to take into account non-trivial sensing/communication models, including maximum range and line-of-sight visibility in 3D, 
\item stability of the overall multi-robot dynamical system, 
\item decentralized exploration capability,
\item possibility for more than one robot to visit different targets at the same time, 
\item online path planning without the need for any (centralized) pre-planning phase,
\item applicability to both 2D and 3D cluttered environments, 
and finally 
\item completeness of the multi-target exploration (i.e., all robots are guaranteed to reach all their targets in a finite time).
\end{inparaenum}
The items {\itshape i) - iv)} have already been tackled in~\cite{2013e-RobFraSecBue} and are here taken as a basis for our work. On the other hand, the combination of {\itshape i) - iv)} with the items {\itshape v) - ix)} is a novel contribution: to the best of our knowledge, our work is then the first attempt to propose a decentralized multi-target exploration algorithm possessing all the mentioned features altogether.

The rest of the paper is organized as follows: \cref{sec:prob} provides a formal description of the problem under consideration. The proposed algorithm is then thoroughly illustrated in \cref{sec:algo}. In \cref{sec:simExp}, we report the results of extensive Monte Carlo simulations and experiments with real quadrotors, and \cref{sec:conclusions} concludes the paper. In the Appendix, we recap the main features of the decentralized continuous connectivity method presented in~\cite{2013e-RobFraSecBue} which is extensively exploited in this paper. We finally note that a preliminary version of our work has been presented in~\cite{2013f-NesRobFra,2013l-NesRobFra}.

\section{System Model and Problem Setting}\label{sec:prob} 

We consider a group of $N$ robots operating in a 3D obstacle-populated environment and denote with $q_i\in\mathbb{R}^3$ the position of a
reference point of the $i$-th robot, $i=1,\ldots,N$, in an inertial world frame. We also let $\mathcal{O}$ be the set of obstacle points in the environment. Each robot $i$ is assumed to be endowed with an omnidirectional sensor able to measure the relative position $q_j-q_i$ of another robot $j$ provided that:
\begin{enumerate}
  \item $\|q_j-q_i\|<R_s$, where $R_s>0$ is the maximum sensing range of the sensor, and
  \item $\min_{\varsigma\in[0,1],o\in\mathcal{O}}\|q_i +
\varsigma (q_j - q_i) - o\| \ge R_o$, i.e., the line segment connecting $q_i$ to $q_j$ is at least at distance $R_o>0$ away from any obstacle
point.
\end{enumerate}
These two conditions account for two common characteristics of exteroceptive sensors, namely, presence of a limited sensing range $R_s$, and the need for a non-occluded line-of-sight visibility\footnote{More complex sensing models could also be taken into account, see~\cite{2013e-RobFraSecBue} for a discussion in this sense.}. We further assume that if the $i$-th robot can measure $q_j-q_i$ then it can also communicate with the $j$-th robot with negligible delays, that is, the sensing and communication graphs are taken coincident. This assumption is justified by the fact that communication typically relies on wireless technology, thus with a broader range than sensing and without the need for direct visibility to operate. The neighbors of the $i$-th robot are denoted with $\mathcal{N}_i(t)$, i.e., the (time-varying) set of robots whose relative position can be measured by the $i$-th robot at time $t$.

Each robot $i$ is also endowed with a sensor that measures the relative position $o-q_i$ of every obstacle point $o\in {\cal O}$ such that $\|o-q_i\|<R_m$, where $R_m>0$ is the maximum sensing range of this sensor.


Consider the time-varying (undirected) \emph{interaction graph} defined as $\mathcal{G}(t)=(\mathcal{V},\mathcal{E}(t))$, where $\mathcal{V}=\{1,\ldots,N\}$ and $\mathcal{E}(t)=\{(i,j)\,|\,j\in\mathcal{N}_i(t)\}$.
Preserving connectivity of $\mathcal{G}(t)$ for all $t$, allows every robot to communicate \emph{at any time} with any other robot in the network by means of a suitable multi-hop routing strategy, although due to efficiency and scalability reasons, it is always preferred to use one-hop communication when possible.

As previously stated, decentralized continuous connectivity maintenance is guaranteed by exploiting the method described in~\cite{2013e-RobFraSecBue}, which is based on a gradient-descent action that keeps positive the second smallest eigenvalue $\lambda_2$ of the \emph{sensor-based} weighted graph Laplacian~\citep{1973-Fie} (see Appendix~\ref{app:connectivity} for a formal definition).

Each $i$-th robot is finally endowed with a local motion controller able to let $q_i$ track any arbitrary desired $\bar{\mathcal{C}}^2$ trajectory $q_i(t)$ with a sufficiently small tracking error. This is again a well-justified assumption for almost all mobile robotic platforms of interest, and its validity will also be supported by the
experimental results of Sec.~\ref{sec:exp}. Following the control framework introduced in~\cite{2013e-RobFraSecBue}, the dynamics of $q_i$ is then modeled as the following second order system
\begin{equation}\label{eq:second_order}
\Sigma:
\begin{cases}
	M_i \dot v_i - f_i^B - f_i^\lambda = f_i \\
	\dot q_i = v_i 
\end{cases}\qquad
i=1,\ldots,N
\end{equation}
where $v_i\in \mathbb{R}^3$ is the robot velocity, $M_i\in\mathbb{R}^{3\times3}$ is its positive definite inertia matrix, and:
\begin{enumerate}
\item $f_i^B=-B_i v_i\in\mathbb{R}^3$ is the \emph{\dampingForce{}} (with $B_i\in\mathbb{R}^{3\times3}$ being a positive definite damping matrix) meant to represent both typical friction phenomena (e.g., wind/atmosphere drag in the case of aerial robots) and/or a stabilizing control action;
\item $f_i^\lambda\in\mathbb{R}^3$ is the \emph{\connectivityForce{}} whose decentralized computation and properties are thoroughly described in~\cite{2013e-RobFraSecBue} (a short recap is provided in Appendix~\ref{app:connectivity}); 
\item $f_i\in\mathbb{R}^3$ is the \emph{\travelingForce{}} used to actually steer the robot motion in order to execute the given task.
An appropriate design of $f_i$ is the main goal of this work. As will be clear in the following, special care must be taken in the design of $f_i$ to avoid, for instance, deadlocks situations in which the robot group `gets stuck'.
\end{enumerate}
The following fact, shown in~\cite{2013e-RobFraSecBue} and recalled in Appendix~\ref{app:connectivity}, holds:
\begin{fact} As long as $f_i$ keeps bounded, the action of the \connectivityForce{} $f_i^\lambda$ will always ensure obstacle and inter-robot collision avoidance \emph{and} continuous connectivity maintenance for the graph $\calG(t)$ despite the various sensing/communication constraints (in the worst case, by completely dominating the bounded $f_i$).
\end{fact}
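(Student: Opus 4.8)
The plan is to recast the claim as a \emph{forward-invariance} statement for the set of admissible configurations and to establish it by an energy (Lyapunov) argument, mirroring the construction that defines $f_i^\lambda$ in~\cite{2013e-RobFraSecBue}. Let $\mathcal{D}$ be the open set of stacked configurations $q=(q_1,\dots,q_N)$ that are simultaneously connected ($\lambda_2>0$), collision-free among robots, and collision-free with respect to $\mathcal{O}$, where the range $R_s$, the line-of-sight condition, and the safety margin $R_o$ are all encoded in the weights of the sensor-based Laplacian. By design $f_i^\lambda=-\partial V_\lambda/\partial q_i$ is the anti-gradient of a smooth potential $V_\lambda=V_\lambda(\lambda_2)$ that is bounded below and diverges as the configuration approaches $\partial\mathcal{D}$, i.e.\ $V_\lambda\to+\infty$ whenever any edge weight (and hence $\lambda_2$) degrades to zero. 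First I would fix this barrier potential precisely.

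Second, I would introduce the total mechanical energy
\[
H=\tfrac12\sum_{i=1}^N v_i^\top M_i v_i + V_\lambda
\]
and differentiate it along $\Sigma$. Substituting $M_i\dot v_i=-B_iv_i+f_i^\lambda+f_i$ and using the chain rule $\dot V_\lambda=\sum_i(\partial V_\lambda/\partial q_i)^\top v_i=-\sum_i(f_i^\lambda)^\top v_i$, the conservative contribution of the \connectivityForce{} cancels and leaves
\[
\dot H=-\sum_{i=1}^N v_i^\top B_i v_i+\sum_{i=1}^N v_i^\top f_i.
\]
Writing $v,f$ for the stacked velocity and \travelingForce{} vectors, positive definiteness of each $B_i$ gives $\sum_i v_i^\top B_i v_i\ge\beta\|v\|^2$ for some $\beta>0$, while boundedness of $f_i$ gives $\|f\|\le F_{\max}$; completing the square then yields $\dot H\le-\beta\|v\|^2+F_{\max}\|v\|\le F_{\max}^2/(4\beta)$, a constant.

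Third, I would close the argument by ruling out finite escape. On any finite interval $[0,t]$ the bound on $\dot H$ integrates to $H(t)\le H(0)+\tfrac{F_{\max}^2}{4\beta}\,t<\infty$, so $V_\lambda$ stays finite on every bounded interval. Were the configuration to reach $\partial\mathcal{D}$ at some finite time, $V_\lambda$ (hence $H$) would diverge there, contradicting this bound; therefore $q(t)\in\mathcal{D}$ for all $t$, which is exactly continuous connectivity together with obstacle and inter-robot collision avoidance. This is the rigorous form of the informal picture that, near $\partial\mathcal{D}$, one has $\|f_i^\lambda\|\to\infty$, so $f_i^\lambda$ \emph{completely dominates} the bounded $f_i$ and steers the state back into the admissible region.

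The main obstacle lies in the construction rather than the energy bookkeeping: one must exhibit a \emph{single} smooth potential whose blow-up locus is \emph{exactly} $\partial\mathcal{D}$, simultaneously encoding maximum range, line-of-sight occlusion, and both inter-robot and obstacle safety margins through the Laplacian weights, and one must ensure that $\lambda_2$ is differentiable along trajectories so that $f_i^\lambda=-\partial V_\lambda/\partial q_i$ is well defined. The latter is delicate at configurations where $\lambda_2$ is not a simple eigenvalue. Since the present statement is only recalled from~\cite{2013e-RobFraSecBue}, I would invoke that design of the weights and of $V_\lambda$ rather than re-derive it, and then assemble the energy argument above.
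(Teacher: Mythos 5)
Your energy-barrier argument is correct and is essentially the same passivity-based reasoning that the paper relies on: the Fact is not proved here but recalled from~\cite{2013e-RobFraSecBue}, whose construction (weights vanishing exactly at the range/occlusion/collision limits, a potential $V^\lambda(\lambda_2)$ diverging as $\lambda_2\to\lambda_2^{\min}$, and passivity of $\Sigma$ w.r.t.\ $(f_i,v_i)$) is summarized in Appendix~\ref{app:connectivity} and matches your $\dot H\le -\beta\|v\|^2+F_{\max}\|v\|$ bookkeeping and finite-escape exclusion. Your closing caveats (that the blow-up locus of the single potential must coincide with the boundary of the admissible set, and that differentiability of $\lambda_2$ is delicate at repeated eigenvalues) are exactly the points the paper likewise delegates to the cited reference rather than re-deriving.
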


To summarize, each robot has
\begin{inparaenum}[\itshape i)]
\item an accurate enough measurement of its own location,
\item an omnidirectional sensor which is able to measure relative positions of other robots and obstacles in its close proximity,
\item negligible (compared to the time scale of the robot motion) communication delays with all robots that it can sense/communicate with,
\item the ability to accurately track a smooth path with a force controller.
\end{inparaenum}

\subsection{Multi-target Exploration Problem}\label{sec:target_visiting} 

We consider the broad class of problems in which each robot runs a black-boxed algorithm that produces \emph{online}\footnote{By \emph{online} we mean that the targets are generated at runtime, thus precluding the presence of a preliminary phase in which the robots may \emph{plan in advance} the multi-target exploration action. Indeed, if all the targets are known beforehand, one could still apply our method but other planning strategies might potentially lead to better solutions.} a continually adjustable list of targets that have to be visited by the robot in the presented order. 
We refer to this algorithm as the \emph{target generator} of the $i$-th robot, and we also assume that the portion of the map needed to reach the next location from the current position $q_i$ is known to robot~$i$.
The target generator may represent a large variety of algorithms, such as pursuit-evasion~\citep{2012a-DurFraBul}, patrolling~\citep{2012b-PasFraBul}, exploration/mapping~\citep{2009b-FraFreOriVen,2005-BurMooStaSch}, mobile-ad-hoc-networking~\citep{2005-AntArrChiSet}, and active localization~\citep{2001-JenKri}. 
It might be a cooperative algorithm, or each robot could have a target generator with objectives that are independent from the other target generators. Another possibilty is to appoint a human supervisor as the target generator.

Depending on the particular application, the locations in the lists provided online by the target generators may, e.g., represent:
\begin{enumerate}
  \item view-points from where to perform the sensorial acquisitions,
  \item coordinates of objects that have to be picked up or dropped down,
  \item positions of some base stations located in the environment.
\end{enumerate}

We formally denote with $(z_i^1,\ldots,z_i^{m_i})\in\mathbb{R}^{3\times{m_i}}$ the list of $m_i$ locations provided by the $i$-th target generator. Additionally, we consider the possibility, for the target generator, to specify a time duration $\Delta t_i^k<\infty$ for which the $i$-th robot is required to stay close to the point $z_i^k$, with $k=1,\ldots,m_i$.
This quantity may represent, with reference to the previous examples, the time
\begin{enumerate}
  \item needed to perform a full sensorial acquisition,
  \item necessary to pick up/drop down an object,
  \item required to upload/download some information from a base station,
\end{enumerate}
and can also possibly be adjusted at runtime during the execution of the respective task.

Finally, we also introduce the concept of a \emph{cruise speed} $v_i^\text{cruise}>0$ that should be maintained by the $i$-th robot during the transfer phase from a point to the next one.

Given these modeling assumptions, the problem addressed in this paper can be formulated as follows:
\begin{prob}\label{prob:main}
Given a sequence of targets $z_i^1,\ldots,z_i^{m_i}$ (presented online) for every robot $i=1,\ldots,N$, together with the corresponding sequence of time durations $\Delta t_i^1,\ldots,\Delta t_i^{m_i}$ and a radius $R_z$,
design, for every $i=1,\ldots,N$, a decentralized feedback control law $f_i$ (i.e., a function using only information locally and $1$-hop available to the $i$-th robot) for system~\eqref{eq:second_order} which is bounded and such that, for the closed-loop trajectory $q_i(t,f_{i,{[0,t)}})$, there exists a time sequence $0<t_i^1<\ldots<t_i^{m_i}<\infty$ so that for all $k=1\ldots m_i$, robot $i$ remains for the duration $\Delta t_i^k$ within a ball of radius $R_z$ 
centered at~$z_i^k$, formally $\forall t\in[t_i^k,t_i^k+\Delta t_i^k]: \|q_i(t) - z_i^k\|< R_z$.
\end{prob}

\section{Decentralized Algorithm}\label{sec:algo}

In this section, we describe the proposed distributed algorithm aimed at generating a \travelingForce{} $f_i$ that solves~\cref{prob:main}. We note that the design of such an autonomous distributed algorithm requires special care: When added to the \connectivityForce{} in~\eqref{eq:second_order}, the \travelingForce{} $f_i$ should fully exploit the group capabilities to concurrently visit the targets of all robots whenever possible and, at the same time, should not lead to `\emph{local minima}', where the robots get stuck, due to the simultaneous presence of the hard connectivity constraint. While~\cite{2013e-RobFraSecBue} already gave an exact description of $f_i^B$ and $f_i^\lambda$, an application of $f_i$ was kept open. The main focus of this work is to define $f_i$ in such a way that the above mentioned challenges are properly addressed. 

In order to provide an overview of the several variables used in~\cref{sec:prob,sec:algo}, we included~\cref{tab:variableNames} for the reader's convenience.


\begin{table}
\caption{Meaning of the variable names.}
\label{tab:variableNames}
\centering
\begin{tabular}{cp{0.75\columnwidth}}
variable            & meaning                       \\ \hline
$N$                 & number of robots              \\
$q_i$               & position of $i$-th robot      \\
$v_i$               & velocity of $i$-th robot      \\
$\mathcal{O}$       & set of obstacle points        \\
$R_s$               & maximum sensing range         \\
$R_o$               & minimum distance to obstacle  \\
$R_c$               & minimum inter-robot distance  \\
$\mathcal{N}_i$     & neighbors of $i$-th robot     \\
$\mathcal{G}$       & interaction graph             \\
$\lambda_2$         & second smallest eigenvalue of the sensor-based weighted graph Laplacian    \\
$f_i^\lambda$       & \connectivityForce{}          \\
$f_i^B$             & \dampingForce{}               \\
$f_i$               & \travelingForce{}             \\
$z_i^k$             & $k$-th target of $i$-th robot \\
$\Delta t_i^k$      & amount of time to stay close to target $z_i^k$ \\
$R_z$               & maximum distance to target when anchored \\
$v_i^\text{cruise}$ & maximum cruise speed          \\
$\gamma_i$          & path to current target, starting from position of robot at time of computation \\
$q_i^\gamma$        & closest point of path from current position \\
$d_i^\gamma$        & length of remaining path \\
$R_\gamma$			& distance to path at which it should be re-planned \\
$\alpha_\Lambda$	& weighting of position vs. velocity error \\
$e_i$				& absolute tracking error of $i$-th robot along path \\ 
$(x_c,x_M)$         & tracking error bounds for the traveling efficiency \\
$\Lambda_i$         & traveling efficiency of $i$-th robot (i.e., tracking error nonlinearly scaled to $[0, 1]$ based on $x_c$, $x_M$) \\
$\hat\Lambda_p^i$   & estimation of the traveling efficiency of the \primeTraveler{} by the $i$-th robot \\
$\Theta_i$          & force direction alignment between connectivity and traveling force of $i$-th robot \\
$\sigma$            & weighting between the force direction alignment and the \primeTraveler{} traveling efficiency \\
$\rho_i$            & downscaling factor of a \secondaryTraveler{}, dependent on $\hat\Lambda_p^i$, $\Theta_i$ and $\sigma$
\end{tabular}
\end{table}


\subsection{Notation and Algorithm Overview}\label{sec:qualitative_desc}

As in any distributed design, several instances of the proposed algorithms run separately on each robot and locally exchange information with the `neighboring' instances via communication. 
Each instance is split into two concurrent routines: a \emph{planning algorithm} and a \emph{motion control algorithm} whose pseudocodes are given in~\cref{alg:logicLoop} and~\cref{alg:ctrlLoop}, respectively. 
The planning algorithm acts at a higher level and performs the following actions:
\begin{itemize}
\item it processes the targets provided by the target generator, 
\item it generates the desired path to the current target, and 
\item it selects an appropriate motion control behavior (see later).
\end{itemize}
The motion control algorithm acts at a lower level by specifying the \travelingForce{} $f_i$ as a function of the behavior and the planned path selected by the planning algorithm\footnote{The two routines can run at two different frequencies, typically slower for the planning loop and faster for the motion control loop.}.

The two algorithms have access to the same variables which are formally introduced as follows (see~\cref{fig:overview} for a graphical representation of some of these variables): the variable \varFontNoSizeSmall{targetQueue}$_i$ is filled online by the target generator and contains a list of future targets to be visited by the $i$-th robot. During the overall running time of the algorithms, the target generator of robot $i$ has access to the whole list \varFontNoSizeSmall{targetQueue}$_i$ (which can also be changed online if needed).
The current target for the $i$-th robot (i.e., the last target extracted from the first entry in \varFontNoSizeSmall{targetQueue}$_i$) is denoted with~$z_i$.
Variable $\gamma_i$ is a $\bar{\mathcal{C}}^2$ geometric path that leads from the current position $q_i$ of the $i$-th robot to the target $z_i$. In our implementation, we used B-splines~\citep{2008-BiaMel} in order to get a parameterized smooth path, but any other $\bar{\mathcal{C}}^2$ path would be appropriate. If the robot is not traveling towards any target, then $\gamma_i$ is set to $\valueFontNoSizeSmall{null}$.

\begin{figure}
\centering
\includegraphics[width=0.99\columnwidth]{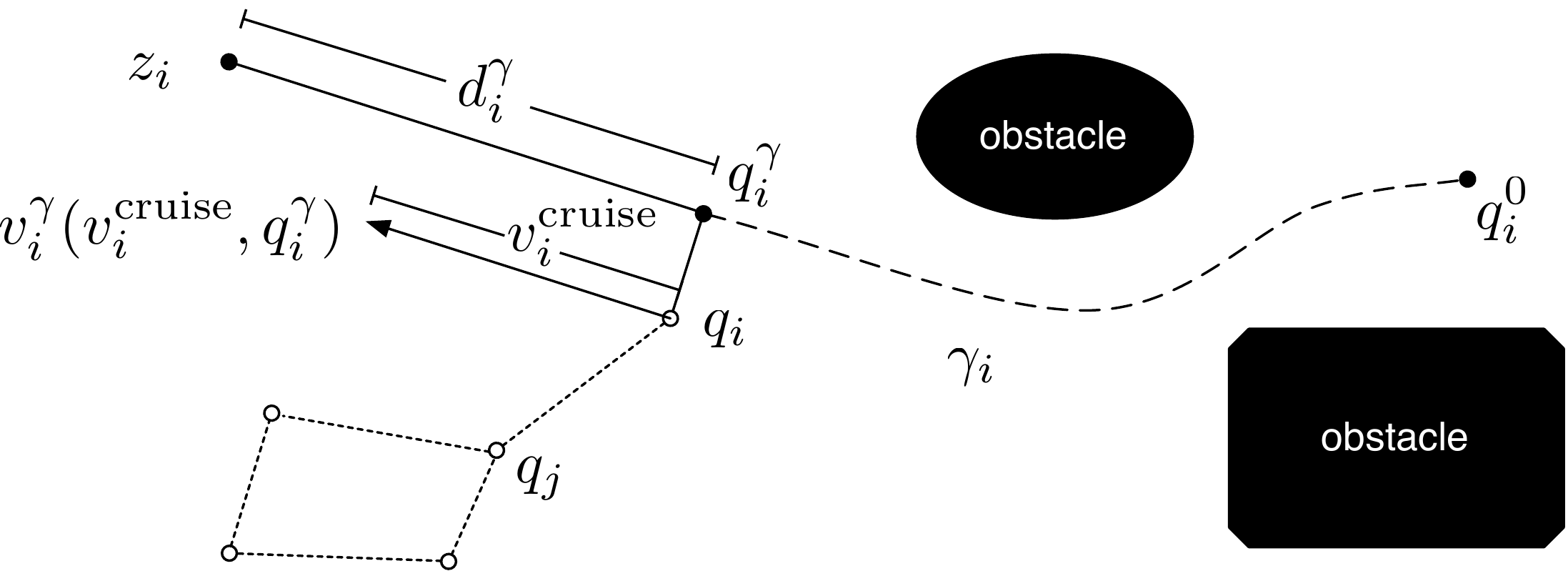}
\caption{Position $q_i$ and path $\gamma_i$ followed by a traveler from the point
$q_i^0$ to the current target $z_i$.
The solid part of the path represents the \emph{remaining path} which starts at the closest point on the path $q_i^\gamma$ and whose length is denoted
by $d_i^\gamma$.}
\label{fig:overview}
\end{figure}

With reference to~\cref{fig:overview}, we also denote with $q_i^\gamma$ the closest point of the path $\gamma_i$ to $q_i$, i.e., the solution of \linebreak $\arg\min_{p\in\gamma_i}\|p-q_i\|$. In case of multiple solutions, we choose the one with the largest arc-length, i.e., the one nearest to the target along the path. Therefore, the closest point $q_i^\gamma$ can be considered as unique in the following.
The portion of the path $\gamma_i$ from $q_i^\gamma$ to $z_i$ is referred to as the \emph{remaining path},
and its length is denoted with~$d^\gamma_i$.

The motion behavior of the $i$-th robot is determined by the variable \varFontNoSizeSmall{state}$_i$ that can take four possible values: 
\begin{itemize}
\item \valueFontNoSizeSmall{connector}
\item \valueFontNoSizeSmall{prime-traveler}
\item \valueFontNoSizeSmall{secondary-traveler}
\item \valueFontNoSizeSmall{anchor}.
\end{itemize}
The following provides a qualitative illustration of these motion behaviors, while a functional description is given in the next sections:
\begin{itemize}[$\bullet$]

\item \emph{\connector{}}: a robot in this state is not assigned any target by the target generator and therefore, its only goal is to help keeping the graph $\mathcal{G}$ connected. For this reason $f_i$ is set to zero and hence the robot is subject solely to the damping and \connectivityForce{} $f^\lambda_i$ in~(\ref{eq:second_order}); 

\item \emph{\primeTraveler{}}: a robot in this state travels towards its current target $z_i$ along the path $\gamma_i$ thanks to the force $f_i$. 
At the same time, the robot distributively broadcasts to every other robot a non-negative real number, denoted with $\Lambda_i$, that represents its \emph{traveling efficiency}, i.e., a measure how well it is able to follow its desired path while being influenced by the other robots in the group via the \connectivityForce{}~$f_i^\lambda$ (which is described in more detail later).
It is essential for the algorithm that only one \primeTraveler{} exists in the group at any time. Every other robot with an assigned target needs to be a \secondaryTraveler{} or \anchor{}. This feature will allow one robot (the \primeTraveler{}) to reach its target with a high priority, while the other robots will only be allowed to reach their own targets as long as this action does not hinder the \primeTraveler{} goal.

\item \emph{\secondaryTraveler{}}: a robot in this state travels towards its current target $z_i$ along the path $\gamma_i$ thanks to the force $f_i$. The robot keeps an internal estimation $\hat\Lambda_p^i$ of the traveling efficiency of the current \primeTraveler{}, and it scales down the intensity of its \travelingForce{} $f_i$ by an \emph{adaptive gain} $\rho_i$ whenever the action of $f_i$ is `too conflicting' w.r.t.~that of $f^\lambda_i$, or the \primeTraveler{} $\hat\Lambda_p^i$ drops lower than a given threshold. 

\item\emph{\anchor{}}: a robot in this state has reached the proximity of the target $z_i$. The force $f_i$ is then exploited in order to keep $q_i$ within a circle of radius $R_z$ centered at $z_i$ (i.e., the robot is `anchored' to the target), while waiting for the associated time $\Delta t_i$ to elapse. 
\end{itemize}

In order to obtain a better intuition of the roles of the robots, we suggest the reader to watch the ``Empty Space'' video available in the attached multimedia material\footnote{\url{http://homepages.laas.fr/afranchi/videos/multi_exp_conn.html}}.

To summarize this qualitative description, these behaviors are designed in such a way that the single \primeTraveler{} approaches its target with the highest priority, the \secondaryTravelers{} approach their targets as long as they have enough spatial freedom by the \connectivityForce{}, the \anchors{} stay close to the target until their task is completed, and the \connectors{} help the \secondaryTravelers{} in providing as much spatial freedom as possible while preserving the connectivity of the graph.

Whenever a robot moves, it may indirectly exert a certain \connectivityForce{} on all its neighbors because of the properties of $f_i^\lambda$ (i.e., for retaining generalized connectivity of the graph $\calG$ (see~\cite{2013e-RobFraSecBue} and Appendix~\ref{app:connectivity}). This connectivity action can possibly conflict with the \travelingForce{} $f_i$, and also prevent, in the worst case, fulfilment of the multi-target exploration task (e.g., the group falls in a local minimum because two robots start traveling in opposite directions over too large distances, thus threatening connectivity maintenance).

Since the \connectors{} implement $f_i=0$ by definition, they cannot directly hinder the \primeTraveler{} motion. In other words, a group made by all
\connectors{} and one \primeTraveler{} would always allow the \primeTraveler{} to reach its target.
Presence of \anchors{} can instead block the \primeTraveler{} because of the anchoring force 
which prevents them to move away from their targets. 
Nevertheless the anchoring phase can only last for a finite time $\Delta t_i^k$ after which the `anchor' changes state and is again free to move. 

No such mechanism is instead present for the \secondaryTravelers{} which would constantly attempt to move along their paths with a $\rho_i$ set to~$1$. As explained, if many robots are simultaneously traveling in arbitrary directions inside a cluttered environment, while also maintaining connectivity of $\calG$, the overall group motion can potentially (and quite easily) fall into a local minimum. 
The idea behind the gain $\rho_i$ is to then adaptively scale down the \travelingForce{} $f_i$ of the \secondaryTravelers{} whenever either 
\begin{inparaenum}[\itshape (i)\upshape]
\item the direction $f_i$ deviates too much from the connectivity force $f^\lambda_i$, or
\item the \primeTraveler{} motion is nevertheless too obstructed by the actions of the other \secondaryTravelers{} in the group. 
\end{inparaenum}
Consequently, this gain $\rho_i\in[0,1]$ is chosen so that the current \primeTraveler{} can always reach its target, no matter the motion planned by the \secondaryTravelers{} in the group. A formal description of this concept will be given in~\cref{sec:rho}.

\subsection{Start-up phase}\label{sec:startup}
\begin{procedure}[t]
\caption{Start-up for Robot $i$()}
\label{alg:init}

\footnotesize{


\uIf{\varFont{targetQueue}$_i$ is empty}
{
  $\gamma_i$ $\gets$ \valueFont{null} \;
  \varFont{state}$_i$ $\gets$ \valueFont{connector} \label{line:startup:connector}
}
\Else
{
  Extract first target from \varFont{targetQueue}$_i$ and save it as $z_i$ \;
  $\gamma_i$ $\gets$ Shortest obstacle-free path from $q_i$ to $z_i$\nllabel{line:startup:gamma_computed}
  
    Enroll in the list of Candidates to take part in the first distributed \primeTraveler{} election \label{line:startup:election}

    \uIf{  $i=\arg\min_{j\in\text{Candidates}}d^\gamma_j$ }
    {
      \varFont{state}$_i$ $\gets$ \valueFont{prime-traveler}
  }
  \Else
  {
    \varFont{state}$_i$ $\gets$ \valueFont{secondary-traveler}
  }
} 
$\hat\Lambda_p^i$ $\gets$ $0$ \nllabel{algo:init:estimate}\;
Run \cref{alg:logicLoop} and \cref{alg:ctrlLoop} in parallel\nllabel{algo:init:run_algos}\;
}
\end{procedure}

The Procedure~`\ref{alg:init}' performs the distributed initialization of the planning and motion control algorithms. Its pseudocode is quickly commented in the following.

At the beginning, if \varFontNoSizeSmall{targetQueue}$_i$ is empty then the path $\gamma_i$ is set to \valueFontNoSizeSmall{null} and \varFontNoSizeSmall{state}$_i$ to \valueFontNoSizeSmall{connector} (line~\ref{line:startup:connector}).
Otherwise the first target from \varFontNoSizeSmall{targetQueue}$_i$ is \linebreak extracted and saved in $z_i$. Then, the robot $i$ computes a $\bar{\mathcal{C}}^2$ shortest and obstacle-free path $\gamma_i$ that connects its current position $q_i$ with $z_i$ (line~\ref{line:startup:gamma_computed}). 
This path is generated with a two-step optimization method: 
first, the known portion of the map is discretized into an equally spaced grid in 3D with a cell size of $R_\text{grid}$. A cell is marked as occupied whenever an obstacle lies inside a radius of $R_\text{grid}$ around the cell. On this grid, a shortest path is found via~$A^*$. 
Then, the waypoints obtained from $A^*$ are approximated with a B-spline~\citep{2008-BiaMel} in order to remove corners from the path.
We note that, depending on the smoothing parameter, this approximation is not guaranteed to leave enough clearance from surrounding obstacles. Obstacle avoidance is nevertheless ensured thanks to 
the presence of the \connectivityForce{} that prevents any possible collisions by (possibly) locally adjusting the planned path when needed. 
As an alternative, one could also rely on the method proposed in~\cite{2012k-MasFraBueRob} for directly generating a smooth path with enough clearance from obstacles. 

Subsequently, the robot takes part in the distributed election of the first \primeTraveler{} (see~\cref{sec:election}). Depending on the outcome of this election, \varFontNoSizeSmall{state}$_i$ is set either to \valueFontNoSizeSmall{prime-traveler} or \valueFontNoSizeSmall{secondary-traveler}. 
 
At the end of the initialization procedure, the estimate $\hat\Lambda_p^i$ of the traveling efficiency of the current \primeTraveler{} is initialized to zero (line~\ref{algo:init:estimate}) for all robots, and the planning and motion control algorithms are both started (line~\ref{algo:init:run_algos}).

\subsection{Election of the \primeTraveler{}}\label{sec:election}

In a general election of a new \primeTraveler{}, the current \primeTraveler{} triggers the election process (line~\ref{line:logic:prime_start_election} of \cref{alg:logicLoop}), to which every \secondaryTraveler{} replies with its index and remaining path length, in order to be taken into the list of candidates (line~\ref{line:logic:secondary_election}). Since this election is a low-frequency event, we chose to implement it via a simple flooding algorithm~\citep{2001-LimKim}. Although this solution complies with the requirement of being decentralized, one could also resort to`smarter' distributed techniques such as~\citep{1997-Lyn}. The \primeTraveler{} then waits for $2(N-1)$ steps to collect these replies, being $2(N-1)$ the maximum number of steps needed to reach every robot with flooding and obtain a reply. The winner of this election is then the robot with the shortest remaining path length $d_i^\gamma$, i.e., the robot solving 
$\arg\min_{j\in\text{Candidates}}d^\gamma_j$. In the unlikely event of two (or more) robots having exactly the same remaining path length, the one with the lower index is elected. During the whole election process, the \primeTraveler{} keeps its role and only upon decision it abdicates by switching into the \anchor{} state. After announcing the winner, no \primeTraveler{} exists in the short time interval (at most $N-1$ steps) until the announcement reaches the winning \secondaryTraveler{}. This winning robot then switches into the \primeTraveler{} behavior. This mechanism makes sure that at most one \primeTraveler{} exists at any given time.

The \emph{first} election in the Start-up phase (see~\cref{sec:startup} and line~\ref{line:startup:election} in Procedure~`\ref{alg:init}') is handled slightly differently. Instead of the current \primeTraveler{} organizing the election, robot $1$ is always assigned the role of host and, instead of the only \secondaryTravelers{} replying, every robot with an assigned target replies with its index and remaining path length (including robot $1$ if it has an assigned target).

\subsection{Planning Algorithm}\label{sec:logicLoop}
 
\begin{algorithm}[t]
\caption{Planning for Robot $i$}
\label{alg:logicLoop}

\footnotesize{

\While{\valueFont{true}\nllabel{algo:logic:big_while}}
{
	\Switch{\varFont{state}$_i$}
	{
		\Case{\valueFont{connector}}
		{
			\If {\varFont{targetQueue}$_i$ is not empty						\nllabel{line:logic:queue_not_empty}}
			{
	  			Extract the next target from \varFont{targetQueue}$_i$ and save it as $z_i$	\nllabel{line:logic:extract_target_and_save}
	   
	  			$\gamma_i$ $\gets$ Shortest obstacle-free path from $q_i$ to $z_i$	\nllabel{line:logic:gamma_computed}
	   
		 		\uIf{	There is no \primeTraveler{} in the group}
		 		{
					\varFont{state}$_i$ $\gets$\valueFont{prime-traveler}	\nllabel{line:logic:from_connector_to_prime_traveler} 
	    		}
	    		\Else
	    		{
					\varFont{state}$_i$ $\gets$\valueFont{secondary-traveler}	\nllabel{line:logic:from_connector_to_secondary_traveler} 
	    		}
	
	  		}
		}
	  
		\Case{\valueFont{prime-traveler}\nllabel{line:logic:case_prime}}
		{
			\If {$\|q_i-z_i\| < R_z$\nllabel{line:logic:prime_check_target}}
	    	{ 
	    		$\gamma_i$ $\gets$ \valueFont{null}							\label{line:logic:prime_reset_path}
	       
				Permit \primeTraveler{} candidacy within timeout				\label{line:logic:prime_start_election}
				
				\varFont{state}$_i$ $\gets$ \valueFont{anchor}				\label{line:logic:prime_to_anchor} 
			}
		}
	  
		\Case{\valueFont{secondary-traveler}									\label{line:logic:case_secondary}}
		{
			\If {$\|q_i-q_i^\gamma\|>R_\gamma$							\label{line:logic:case_secondary_check_track}}
			{
				$\gamma_i$ $\gets$ Shortest obstacle-free path from $q_i$ to $z_i$	\nllabel{line:logic:secondary_recomputes_gamma}\; 
			}
	
	        \uIf {$\|q_i-z_i\| < R_z$\nllabel{line:logic:secondary_check_target}}
	    	{
	    		$\gamma_i$ $\gets$ \valueFont{null}							\label{line:logic:secondary_reset_path} 
	    		
	    		\varFont{state}$_i$ $\gets$ \valueFont{anchor}				\label{line:logic:secondary_to_anchor}
	    	}
	   		\ElseIf {\primeTraveler{} candidacy is allowed					\label{line:logic:secondary_checks_election}}
	      	{ 
		   		Enroll in the list of Candidates to take part in the distributed \primeTraveler{} election	\label{line:logic:secondary_election}
	      
				\If {$i=\arg\min_{j\in \text{\rm Candidates}}d^\gamma_j$		\label{line:logic:secondary_winner}}
				{
	  	        	\varFont{state}$_i$ $\gets$ \valueFont{prime-traveler}	\label{line:logic:secondary_to_prime}
				}
			}
		}
	    
	    \Case{\valueFont{anchor}												\label{line:logic:case_anchor}}
		{
			\If{task at target $z_i$ is completed								\label{line:logic:anchor_check}}
			{
				\varFont{state}$_i$ $\gets$ \valueFont{connector} 
			}
		}
	}
}

}
\end{algorithm}
 
In this section, we describe in detail the execution of \cref{alg:logicLoop} running on the $i$-th robot, whose logical flow is provided in \Cref{fig:stateMachine} as a graphical representation. The algorithm consists of a continuous loop where different decisions are taken according to the value of \varFontNoSizeSmall{state}$_i$ and according to the following different behaviors:

\begin{figure}
\includegraphics[width=\columnwidth]{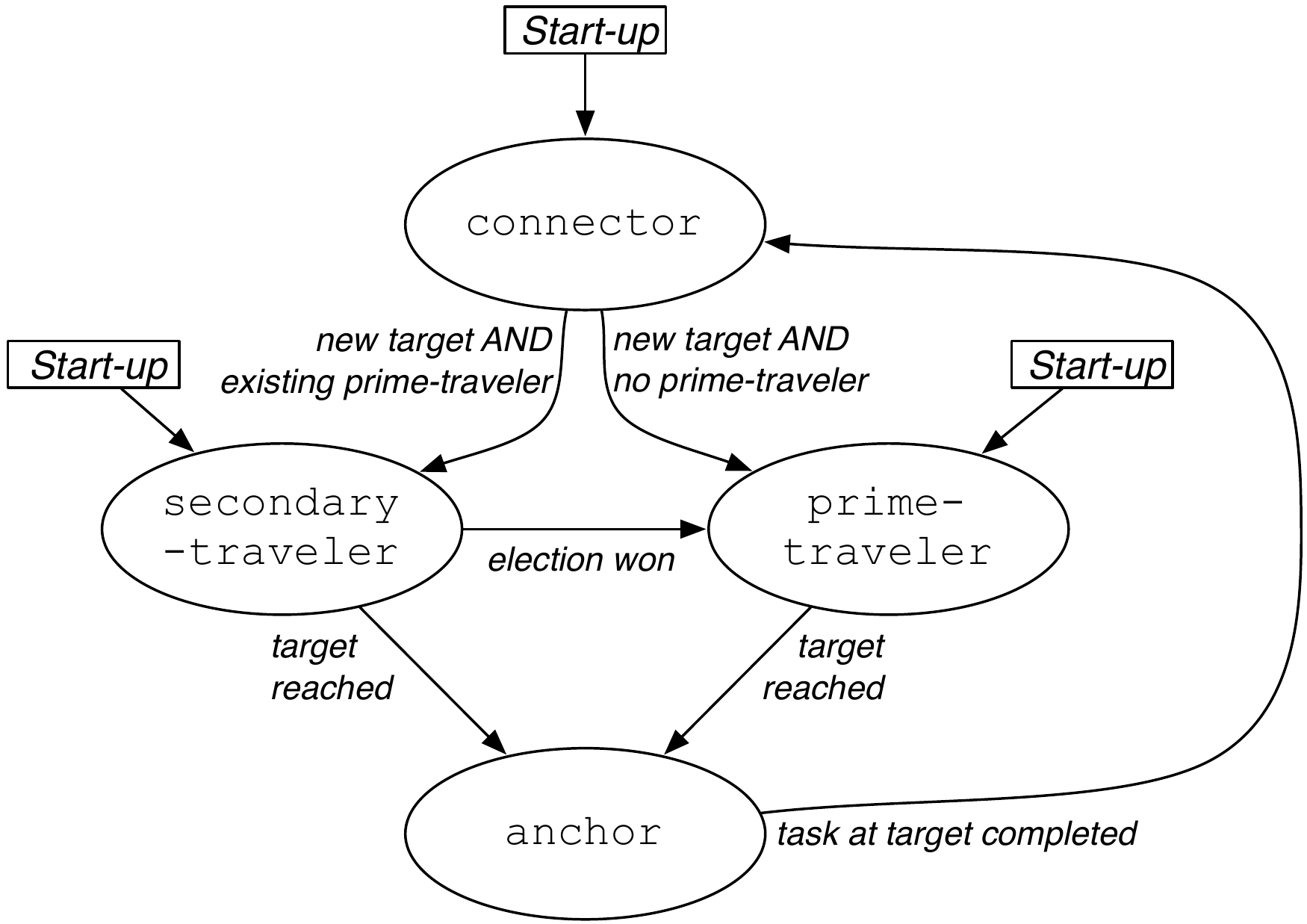}
\caption{State machine of the \cref{alg:logicLoop}.}
\label{fig:stateMachine}
\end{figure}

\subsubsection{case \valueFontNoSizeSmall{connector}}

If \varFontNoSizeSmall{state}$_i$ is set to \valueFontNoSizeSmall{connector} then \mbox{\varFontNoSizeSmall{targetQueue}$_i$} is checked. In case of an empty queue, \varFontNoSizeSmall{state}$_i$ remains \valueFontNoSizeSmall{connector}, otherwise the next target is extracted from the queue and saved in $z_i$ (line~\ref{line:logic:extract_target_and_save}). Then the $i$-th robot computes a $\bar{\mathcal{C}}^2$ shortest and obstacle-free path $\gamma_i$ connecting the current robot position $q_i$ with $z_i$ (line~\ref{line:logic:gamma_computed}) implementing what was previously described in the start-up procedure. 
Finally, the robot changes the value of \varFontNoSizeSmall{state}$_i$ in order to track $\gamma_i$. In particular, if no \primeTraveler{} is present in the group, then \varFontNoSizeSmall{state}$_i$ is set to \valueFontNoSizeSmall{prime-traveler} (line~\ref{line:logic:from_connector_to_prime_traveler}). Otherwise, \varFontNoSizeSmall{state}$_i$ is set to \valueFontNoSizeSmall{secondary-traveler} (line~\ref{line:logic:from_connector_to_secondary_traveler})\footnote{Presence of a \primeTraveler{} can be easily assessed in a distributed way by, e.g., flooding~\citep{2001-LimKim} on a low frequency.}. 

\subsubsection{case \valueFontNoSizeSmall{prime-traveler}}

When \varFontNoSizeSmall{state}$_i$ is set to \valueFontNoSizeSmall{prime-traveler} (line~\ref{line:logic:case_prime}) 
and the current position $q_i$ is closer than $R_z$ to the target $z_i$ (line~\ref{line:logic:prime_check_target}), the following actions are performed:
\begin{itemize}
\item the path $\gamma_i$ is reset to \valueFontNoSizeSmall{null} (line~\ref{line:logic:prime_reset_path}),
\item a new distributed \primeTraveler{} election as described in~\cref{sec:election} is announced (line~\ref{line:logic:prime_start_election}),
\item the robot abdicates the role of \primeTraveler{} and \varFontNoSizeSmall{state}$_i$ is set to \valueFontNoSizeSmall{anchor} (line~\ref{line:logic:prime_to_anchor}).
\end{itemize}
If, otherwise, $z_i$ is still far from the current robot position $q_i$, then \varFontNoSizeSmall{state}$_i$ remains unchanged and the robot continues to travel towards its target.

\subsubsection{case \valueFontNoSizeSmall{secondary-traveler}} \label{sec:caseSecondary}

When \varFontNoSizeSmall{state}$_i$ is \valueFontNoSizeSmall{secondary-traveler} (line~\ref{line:logic:case_secondary}) the distance $\|q_i^\gamma-q_i\|$ to the (closest point on the) path is checked (line~\ref{line:logic:case_secondary_check_track}). If this distance is larger than the threshold $R_\gamma$, the robot replans a path from its current position $q_i$ (line~\ref{line:logic:secondary_recomputes_gamma}). This re-planning phase is necessary since a \secondaryTraveler{} could be arbitrarily far from the previously planned $\gamma_i$ because of the `dragging action' of the current \primeTraveler{}. \Cref{sec:ctrlLoop} will elaborate more on this point. 
Subsequently, if $q_i$ is closer than $R_z$ to the target $z_i$ (line~\ref{line:logic:secondary_check_target}), the path $\gamma_i$ is reset to \valueFontNoSizeSmall{null} (line~\ref{line:logic:secondary_reset_path}) and \varFontNoSizeSmall{state}$_i$ is set to \valueFontNoSizeSmall{anchor} (line~\ref{line:logic:secondary_to_anchor}). Otherwise, if the target is still far away, the robot checks whether the \primeTraveler{} abdicated and announced an election of a new \primeTraveler{} (line~\ref{line:logic:secondary_checks_election}). If this was the case, the robot takes part in the election (line~\ref{line:logic:secondary_election}) as described in~\cref{sec:election}. If the robot wins the election (line~\ref{line:logic:secondary_winner}), \varFontNoSizeSmall{state}$_i$ is set to \valueFontNoSizeSmall{prime-traveler} (line~\ref{line:logic:secondary_to_prime}) otherwise it remains set to \valueFontNoSizeSmall{secondary-traveler}. 

\subsubsection{case \valueFontNoSizeSmall{anchor}}

The last case of \cref{alg:logicLoop} is when \varFontNoSizeSmall{state}$_i$ is \valueFontNoSizeSmall{anchor} (line~\ref{line:logic:case_anchor}). The robot remains in this state until the task at target $z_i$ is completed (line~\ref{line:logic:anchor_check}), after which \varFontNoSizeSmall{state}$_i$ is set to \valueFontNoSizeSmall{connector}.

\subsection{Completeness of the Planning Algorithm}

Before illustrating the \emph{motion control algorithm}, we state some important properties that hold during the whole execution of the planning algorithm.
\begin{proposition}\label{lemma:onlyOnePrimeTraveler}
If there exists at least one target in one of the \varFontNoSizeSmall{targetQueue}$_i$, then exactly one \primeTraveler{} will be elected at the beginning of the operation. 
Furthermore, this \primeTraveler{} will keep its state until being closer than $R_z$ to its assigned target. In the meantime no other robot can become \primeTraveler{}.
\end{proposition}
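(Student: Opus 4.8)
The plan is to argue entirely at the level of the finite state machine of \cref{alg:logicLoop}, treating the Start-up Procedure~`\ref{alg:init}' as the base case and the four \varFontNoSizeSmall{state} values as the reachable configurations. First I would dispose of the existence-and-uniqueness claim at initialization. If some \varFontNoSizeSmall{targetQueue}$_i$ contains a target, then at least one robot takes the branch of Procedure~`\ref{alg:init}' that extracts a target and enrolls in the candidate list (line~\ref{line:startup:election}); hence the set of Candidates for the first election is finite and nonempty. The winner is defined as $\arg\min_{j\in\text{Candidates}} d_j^\gamma$ with ties broken by smallest index, which singles out exactly one robot. That robot sets \varFontNoSizeSmall{state} to \valueFontNoSizeSmall{prime-traveler}, every other candidate becomes a \secondaryTraveler{}, and every robot with an empty queue becomes a \connector{}. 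So exactly one \primeTraveler{} exists once initialization completes.

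Next I would establish persistence. In the \valueFontNoSizeSmall{prime-traveler} case of \cref{alg:logicLoop}, the sole outgoing transition sets \varFontNoSizeSmall{state} to \valueFontNoSizeSmall{anchor}, and it is guarded by the test $\|q_i - z_i\| < R_z$ (line~\ref{line:logic:prime_check_target}); no other case of the switch can be entered while \varFontNoSizeSmall{state}$_i$ equals \valueFontNoSizeSmall{prime-traveler}. Therefore, as long as the robot has not come within distance $R_z$ of its target $z_i$, the guard fails, the state is left untouched, and the robot remains the \primeTraveler{}. This is precisely the assertion that the \primeTraveler{} keeps its state until being closer than $R_z$ to $z_i$.

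For the remaining claim I would enumerate every assignment in the algorithm that produces the value \valueFontNoSizeSmall{prime-traveler}. There are exactly two: (a) in the \connector{} case, guarded by ``there is no \primeTraveler{} in the group'' (line~\ref{line:logic:from_connector_to_prime_traveler}); and (b) in the \secondaryTraveler{} case, which can fire only after the guard ``\primeTraveler{} candidacy is allowed'' (line~\ref{line:logic:secondary_checks_election}) followed by an election win (line~\ref{line:logic:secondary_to_prime}). Candidacy is opened only by the incumbent \primeTraveler{}, on the line immediately preceding its own abdication to \valueFontNoSizeSmall{anchor} (line~\ref{line:logic:prime_start_election}), i.e.\ exactly at the instant it reaches $R_z$. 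By the previous paragraph this instant has not occurred during the interval under consideration, so guard~(b) is never met and no \secondaryTraveler{} can win a never-announced election. Simultaneously, because the unique incumbent retains its \primeTraveler{} state throughout the interval, the predicate ``there is no \primeTraveler{}'' is false, so guard~(a) is never met either. Hence no other robot can switch into \valueFontNoSizeSmall{prime-traveler} while the incumbent is still traveling.

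The step I expect to require the most care is the distributed evaluation of the two guards and of the election winner, since the reasoning above treats ``a \primeTraveler{} exists'', ``candidacy is allowed'', and the candidate list as instantaneously known globals, whereas all three are resolved by flooding with nonzero latency. The hard part will be ruling out a spurious window opened by this latency: I would argue that, because the incumbent neither abdicates nor opens candidacy before reaching $R_z$, the true value of each monitored predicate is \emph{constant} over the whole interval, so any faithful flooding detector---which completes within the stated $2(N-1)$ steps---reports that unchanging value and can never transiently signal ``no \primeTraveler{}'' or ``candidacy allowed''. For the first election I would similarly note that robot~$1$, as the single host, gathers all replies and that every candidate evaluates $\arg\min$ over the same completed list, so all robots agree on one winner; this closes the uniqueness claim against distributed disagreement.
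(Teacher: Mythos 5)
Your proof is correct and follows essentially the same route as the paper's: the start-up election yields exactly one \primeTraveler{}, and the only two transitions into \valueFontNoSizeSmall{prime-traveler} in \cref{alg:logicLoop} are blocked while the incumbent exists and has not yet come within $R_z$ of its target. Your additional paragraph on flooding latency addresses a point the paper's proof passes over silently, but it does not change the underlying argument.
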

\begin{proof}
The start-up procedure guarantees that, if there exists at least one target in at least one of the \varFontNoSizeSmall{targetQueue}$_i$, the group of robots includes exactly one \primeTraveler{} and no \anchor{} at the beginning of the task. Any other robot is either a \connector{} or \secondaryTraveler{} depending on the corresponding availability of targets. During the execution of~\cref{alg:logicLoop}, a robot can only switch into \primeTraveler{} when being a \connector{} or a \secondaryTraveler{}. As long as there exists a \primeTraveler{} in the group, a \connector{} cannot become a \primeTraveler{}. Furthermore, a \secondaryTraveler{} becomes a \primeTraveler{} only if it wins the election announced by the \primeTraveler{}. Since the \primeTraveler{} allows for this election only when in the vicinity of its target (within the radius $R_z$), the claim directly follows.
\end{proof}

Using this result, the following proposition shows that \cref{alg:logicLoop} is actually guaranteed to complete the multi-target exploration in the following sense: when presented with a finite amount of targets, all targets of all robots are guaranteed to be visited in a finite amount of time. In order to show this result, an assumption on the robot motion controller is needed.

\begin{assumption}\label{assumption:primeTravelerReachesTarget}
In a group of robots with exactly one \primeTraveler{}, the adopted motion controller is such that the \primeTraveler{} is able to arrive closer than $R_z$ to its target in a finite amount of time regardless of the location of the targets assigned to the other robots.
\end{assumption}

In~\cref{sec:rho} we discuss in detail how the motion controller introduced in the next section~\ref{sec:ctrlLoop} meets Assumption~\ref{assumption:primeTravelerReachesTarget}.

\begin{proposition}
Given a finite number of targets and a motion controller fulfilling~\cref{assumption:primeTravelerReachesTarget}, the whole multi-target exploration task is completed in a finite amount of time as long as the local tasks at every target can be completed in finite time.
\end{proposition}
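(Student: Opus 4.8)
The plan is to prove this by a potential (counting) argument on the total number of unvisited targets, invoking Proposition~\ref{lemma:onlyOnePrimeTraveler} and Assumption~\ref{assumption:primeTravelerReachesTarget} as black boxes. Let $M=\sum_{i=1}^N m_i<\infty$ be the total number of targets and let $\Phi(t)$ be the number of targets whose visit has not yet been \emph{committed} at time $t$, where a target counts as committed the instant its robot enters the ball of radius $R_z$ around it. The goal is to show that $\Phi$ reaches $0$ in finite time and that every committed visit is completed in finite time, which together establish the claim in the sense of~\cref{prob:main}.

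First I would formalize the notion of a \emph{\primeTraveler{} episode}: the maximal interval during which a fixed robot holds the \primeTraveler{} role, which by~\cref{lemma:onlyOnePrimeTraveler} runs from its election until it enters the $R_z$-ball about its current target (line~\ref{line:logic:prime_to_anchor}). By~\cref{assumption:primeTravelerReachesTarget}, each such episode has finite duration, since the \primeTraveler{} reaches within $R_z$ of its target in finite time irrespective of the other robots' targets; note that the anchored robots sit exactly at (previously visited) target locations, so ``regardless of the location of the targets assigned to the other robots'' precisely covers the case of blocking \anchors{}. Upon entering the $R_z$-ball the robot switches to \anchor{}, and by the hypothesis that each local task completes in finite time — together with the anchoring force keeping $q_i$ within $R_z$ — its current target is definitively visited. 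Hence each episode commits and then completes at least one target, so it lowers $\Phi$ by at least one, and the committed target is extracted from the queue and never recommitted.

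Next I would chain the episodes. When the \primeTraveler{} abdicates it announces an election (line~\ref{line:logic:prime_start_election}) which resolves in $O(N)$ communication steps (\cref{sec:election}); the winner, or — if no \secondaryTravelers{} exist but some \connector{} carries a queued target — a self-promoting \connector{} (line~\ref{line:logic:from_connector_to_prime_traveler}), becomes the next \primeTraveler{}. Thus whenever $\Phi>0$ there is always a robot holding an uncommitted target, and the state machine of~\cref{alg:logicLoop} produces a new \primeTraveler{} after a finite inter-episode gap. The timeline therefore decomposes into consecutive finite episodes separated by finite election gaps; since each episode decreases $\Phi$ by at least one from its initial value $M$, there are at most $M$ episodes. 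The overall completion time is a finite sum of finitely many finite episode durations, finite election gaps, and finite anchoring times (which run concurrently with later episodes and so add at most one extra $\Delta t_i^k$ at the end), hence it is finite.

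I expect the main obstacle to be the no-deadlock chaining step rather than the counting: one must verify from~\cref{alg:logicLoop} that whenever uncommitted targets remain a new \primeTraveler{} is actually produced, covering the corner cases in which all remaining target-holders are \connectors{} (each extracts its next target and self-promotes in the absence of a \primeTraveler{}) or are \anchors{} finishing their tasks (each returns to \connector{} and then re-enters the candidate pool with its next queued target). This re-entry of a robot's subsequent targets is exactly what lets the finite per-robot queues be exhausted without starvation, thereby closing the argument.
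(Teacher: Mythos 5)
Your proof is correct and follows essentially the same route as the paper's: both decompose the task into successive \primeTraveler{} episodes, each finite by Assumption~\ref{assumption:primeTravelerReachesTarget} and each clearing at least one target, and both verify via the state machine of \cref{alg:logicLoop} that a new \primeTraveler{} always emerges (from \secondaryTravelers{}, self-promoting \connectors{}, or \anchors{} finishing their tasks) while targets remain. Your explicit potential $\Phi$ merely repackages the paper's ``repeat the loop finitely many times under a worst-case assumption'' into a monotone counting argument; no substantive difference or gap.
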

\begin{proof}
In the trivial case of no targets, the multi-target exploration task is immediately completed. Let us then assume presence of at least one target. \cref{lemma:onlyOnePrimeTraveler} guarantees existence of exactly one \primeTraveler{} at the beginning of the planning algorithm, and that such a \primeTraveler{} will keep its role until reaching its target, an event that, by virtue of Assumption~\ref{assumption:primeTravelerReachesTarget}, happens in finite time.
At this point, assuming as a \emph{worst case} that no \secondaryTraveler{} has reached and cleared its own target in the meantime,
one of the following situations may arise:
\begin{enumerate}
  \item There is at least one \secondaryTraveler{}. The \secondaryTraveler{} closest to its target becomes the new \primeTraveler{} in the triggered election, and it then starts traveling towards its newly assigned target until reaching it in finite time (\cref{lemma:onlyOnePrimeTraveler} and Assumption~\ref{assumption:primeTravelerReachesTarget})
  \item There is no \secondaryTraveler{} and no other \anchor{} besides the former \primeTraveler{}. In this case, no other robot has targets in its queue as, otherwise, at least one \secondaryTraveler{} would exist. Therefore, after completing its task at the target location (a finite duration), the former \primeTraveler{} and now \anchor{} becomes \connector{} and, in case of additional targets present for this robot, it switches back into being a \primeTraveler{} and travels towards the new targets in a finite amount of time as in case~1.
  \item There is no \secondaryTraveler{}, but at least one other \anchor{}. This situation can be split again into two sub-cases: 
  \begin{enumerate}
  \item there exists at least one \anchor{} with a future target in its queue. Then, after a finite time, this \anchor{} becomes \secondaryTraveler{} and case~1 holds;
  \item there is no \anchor{} with a future target in its queue. Then, after a finite time, all \anchors{} have completed their local tasks and case~2 holds.
  \end{enumerate}
\end{enumerate}
In all cases, therefore, one target is visited in finite time by the current \primeTraveler{}.
Repeating this loop finitely many times, for all the (finite number of) targets, allows to conclude that \emph{all} targets will be visited in a finite amount of time, thus showing the completeness of the planning algorithm.

If a \secondaryTraveler{} already reaches its target while the \primeTraveler{} is active, the aforementioned worst case assumption is not valid anymore. But since, in this case, the target of the \secondaryTraveler{} is already cleared, the total number of iterations is even smaller than in the previous worst case, thus still resulting in a finite completion time.
\end{proof}

\subsection{Motion Control Algorithm}\label{sec:ctrlLoop}

\begin{algorithm}[t]
\caption{Motion Control for Robot $i$}
\label{alg:ctrlLoop}
\footnotesize{

\While{\valueFont{true}}
{
  \Switch{\varFont{state}$_i$}
  {
    \Case{\valueFont{connector}}
    {
      Update $\hat\Lambda_p^i$ using $\dot{\hat\Lambda}_p^i=k_\Lambda\sum_{j\in\mathcal{N}_i}(\hat\Lambda_p^j - \hat\Lambda_p^i)$ \label{algo:ctrl:connector_consensus}
        
      $f_i \gets 0$ \label{algo:ctrl:connector_force}
    }
    \Case{\valueFont{prime-traveler} }
    { 
      $\hat\Lambda_p^i \gets \Lambda_i$ using \eqref{eq:velmatch} \label{line:ctrl:prime_estimate}

   
      $f_i \gets f_\text{travel}(q_i,\gamma_i,v_i^\text{cruise})$, using~\eqref{eq:ctrl:track} \nllabel{line:ctrl:prime_tracking}
    }
    \Case{\valueFont{secondary-traveler}}
    { 
      Update $\hat\Lambda_p^i$ using $\dot{\hat\Lambda}_p^i=k_\Lambda\sum_{j\in\mathcal{N}_i}(\hat\Lambda_p^j - \hat\Lambda_p^i)$ \label{line:ctrl:secondary_estimate}
      
      $f_i \gets \rho_i f_\text{travel}(q_i,\gamma_i,v_i^\text{cruise})$, using~\eqref{eq:ctrl:track} and~\eqref{eq:gain} \nllabel{line:ctrl:secondary_tracking}
    }
    \Case{\valueFont{anchor}}
    { 
      Update $\hat\Lambda_p^i$ using $\dot{\hat\Lambda}_p^i=k_\Lambda\sum_{j\in\mathcal{N}_i}(\hat\Lambda_p^j - \hat\Lambda_p^i)$ \label{line:ctrl:anchor_estimate}
 
      $f_i \gets f_\text{anchor}(q_i,z_i,R_z) $, as per~\eqref{eq:anchor} \nllabel{algo:ctrl:hooked_leader} \label{line:ctrl:anchor_tracking} 
    }
  }
}

}
\end{algorithm}

With reference to~\cref{alg:ctrlLoop}, we now describe the motion control algorithm that runs in parallel to the planning algorithm on the $i$-th robot, and whose goal is to determine a \travelingForce{} $f_i$ that can meet Assumption~\ref{assumption:primeTravelerReachesTarget}. 
The algorithm consists of a continuous loop, as before, in which the force $f_i$ is computed according to the behavior encoded in the variable \varFontNoSizeSmall{state}$_i$ determined by~\cref{alg:logicLoop}:

\subsubsection{case \valueFontNoSizeSmall{connector}}

If \varFontNoSizeSmall{state}$_i$ is set to \valueFontNoSizeSmall{connector}, the estimate $\hat\Lambda_p^i$ of the traveling efficiency of the current \primeTraveler{} is updated with a consensus-like algorithm (line~\ref{algo:ctrl:connector_consensus}) that will be described in the next \cref{sec:rho}. The \travelingForce{} $f_i$ is in this case simply set to~$0$~(line~\ref{algo:ctrl:connector_force}).
It is worth mentioning that $f_i=0$ does not mean the $i$-th robot will not move, since a \connector{} is still dragged by the other travelers via the \connectivityForce{} (according to~\eqref{eq:second_order}, the \connectors{} are still subject to $f_i^\lambda$ and $f_i^B$).

\subsubsection{case \valueFontNoSizeSmall{prime-traveler}}

If \varFontNoSizeSmall{state}$_i$ is set to \valueFontNoSizeSmall{prime-traveler}, the estimate $\hat\Lambda_p^i$ is set to the true traveling efficiency $\Lambda_i$, defined by~\eqref{eq:velmatch} (line~\ref{line:ctrl:prime_estimate}). Afterwards (line~\ref{line:ctrl:prime_tracking}) the robot sets
\begin{equation}\label{eq:travelerForcePrime}
f_i = f_\text{travel}(q_i,\gamma_i,v_i^\text{cruise}) \, ,
\end{equation}
where $f_\text{travel}(q_i,\gamma_i,v_i^\text{cruise})\in\mathbb{R}^3$
is a proportional, derivative and feedforward controller meant to travel along $\gamma_i$ at a given cruise speed $v_i^\text{cruise}$:
\begin{align}
\label{eq:ctrl:track}
f_\text{travel}(q_i,\gamma_i,v_i^\text{cruise}) = {}&a_i^\gamma(v_i^\text{cruise},q_i^\gamma) \notag\\&+ k_v (v_i^\gamma(v_i^\text{cruise},q_i^\gamma) - \dot q_i) \\&+ k_p(q_i^\gamma - q_i).\notag
\end{align}
Here, $k_p$ and $k_v$ are positive gains, $q_i^\gamma$ is the point on $\gamma_i$ closest to $q_i$ (see Fig.~\ref{fig:overview}), $v_i^\gamma(v_i^\text{cruise},q_i^\gamma)$ is the velocity vector of a virtual point traveling along $\gamma_i$ and passing at $q_i^\gamma$ with tangential speed $v_i^\text{cruise}$, and $a_i^\gamma(v_i^\text{cruise},q_i^\gamma)$ is the acceleration vector of the same point. It is straightforward to analytically compute both the velocity and the acceleration from $v_i^\text{cruise}$, given the spline representation of the curve~\citep{2008-BiaMel}.

\subsubsection{case \valueFontNoSizeSmall{secondary-traveler}}

If \varFontNoSizeSmall{state}$_i$ is set to \valueFontNoSizeSmall{secondary-traveler}, the estimate $\hat\Lambda_p^i$ is updated with a consensus-like protocol (line~\ref{line:ctrl:secondary_estimate}). Then (line~\ref{line:ctrl:secondary_tracking}) the robot sets
\begin{equation}\label{eq:travelerForceWithRho}
f_i = \rho_i f_\text{travel}(q_i,\gamma_i,v_i^\text{cruise}),
\end{equation}
where $f_\text{travel}$ is defined as in~\eqref{eq:ctrl:track} and $\rho_i\in[0,1]$ is an adaptive gain
meant to scale down the intensity of the action of $f_\text{travel}(q_i,\gamma_i,v_i^\text{cruise})$ whenever
\begin{inparaenum}[\itshape (i)\upshape] 
\item its alignment is too conflicting with the \connectivityForce{} $f^\lambda_i$ or
\item the \primeTraveler{} is not able to efficiently travel along its path because its reached speed is too low compared to its desired cruise speed.
\end{inparaenum}
\Cref{sec:rho} is dedicated to provide details on choosing an effective $\rho_i$.

\subsubsection{case \valueFontNoSizeSmall{anchor}}

If \varFontNoSizeSmall{state}$_i$ is set to \valueFontNoSizeSmall{anchor}, the estimate $\hat\Lambda_p^i$ is again updated using a consensus-like protocol (line~\ref{line:ctrl:anchor_estimate}). Then (line~\ref{algo:ctrl:hooked_leader}) the force $f_i$ is set as
\begin{equation}\label{eq:anchor}
f_i = f_\text{anchor}(q_i,z_i,R_z) = -\parder{V_\text{anchor}^{R_z}(\|q_i - z_i\|)}{q_i}
\end{equation}
where $V_\text{anchor}^{R_z}:[0,R_z)\to [0,\infty)$ is a monotonically increasing potential function of the distance $\distAnchor = \|q_i - z_i\|$ between the robot position $q_i$ and the target $z_i$, and such that $V_\text{anchor}^{R_z}(0)=0$ and $\lim_{\distAnchor\nearrow R_z} V_\text{anchor}^{R_z}(\distAnchor)= \infty$.
Under the action of $f_\text{anchor}(q_i,z_i,R_z) $ the position $q_i$ is then guaranteed to remain confined within
a sphere of radius $R_z$ centered at $z_i$ until the local task at the target location is completed.
In our simulations and experiments we employed
\begin{equation}\label{eq:potential_Vanchor}
V_\text{anchor}^{R_z}(\distAnchor) = -k_z \frac{2R_z}{\pi} \ln\left(\cos\left(\frac{\distAnchor \pi}{2R_z}\right)\right)
\end{equation}
where $k_z$ is an arbitrary positive constant.
The shape of this function is shown in~\cref{fig:controlStrategy:potential_Vanchor} and the associated $f_\text{anchor}$ is
\begin{equation}\label{eq:f_anchor}
f_\text{anchor}(q_i,z_i,R_z) = -k_z \tan\left(\frac{\distAnchor \pi}{2R_z}\right)\frac{q_i - z_i}{\distAnchor}\, .
\end{equation}

\begin{figure}
\centering
\includegraphics[width=0.8\columnwidth]{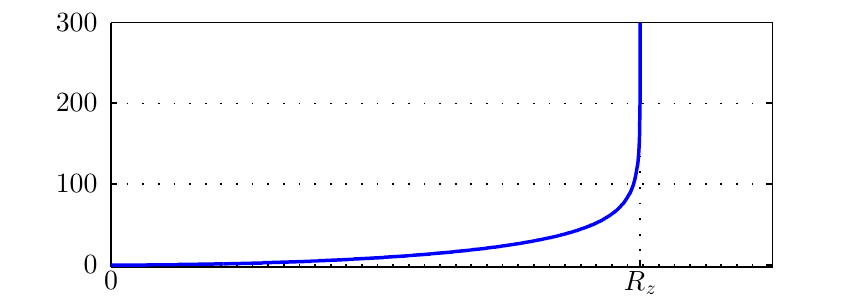}
\caption{Shape of the function $V_\text{anchor}^{R_z}(\distAnchor)$ defined in~\eqref{eq:potential_Vanchor} that is~$0$ on the target~$z_i$ itself ($\distAnchor=0$) and grows unbounded at the border of a sphere with radius~$R_z$.}
\label{fig:controlStrategy:potential_Vanchor}
\end{figure}

\subsection{Traveling Efficiency, Force Alignment and Adaptive Gain}\label{sec:rho}
We now describe how the estimation of the \emph{traveling efficiency} $\Lambda_i$ of all robots and the \emph{adaptive gain} $\rho_i$ of a \secondaryTraveler{}, used in \cref{alg:ctrlLoop}, are actually computed. Remember that the idea behind the gain $\rho_i$ is to adaptively scale down the \travelingForce{} $f_i$ of a \secondaryTraveler{} whenever 
\begin{inparaenum}[\itshape (i)\upshape]
\item the alignment of $f_i$ and the \connectivityForce{} $f^\lambda_i$ is too different, or
\item the traveling efficiency of the \primeTraveler{} is too low.
\end{inparaenum}

Therefore the design of $\rho_i$ aims at guaranteeing that the current \primeTraveler{} can always reach its target, whatever the motion planned by the other robots in the group are, and thus in fact enforces Assumption~\ref{assumption:primeTravelerReachesTarget}.

We recall that we provide a compendium of all important variables in~\cref{tab:variableNames}.

In order to implement the desired behavior we introduce two functions: 
\begin{align}
\Theta &:\mathbb{R}^3\times\mathbb{R}^3\to[0,1]\notag\\
\Lambda &: \mathbb{R}^+_0\times K^*\to[0,1]\notag
\end{align} 
where $K^* = \{(x_c,x_M)\in\mathbb{R}^2\,|\,0\le x_c<x_M\}$, defined as:
\begin{align}
 \Theta(x,y) &= 
   \begin{cases}
   	\frac{1}{2}\left(1+\frac{x^T y}{\|x\| \|y\|} \right) & x\neq0, y\neq 0\\
	1 & \text{otherwise}
\end{cases}\label{eq:dirmatch}\\
  \Lambda(x,x_c,x_M) &= 
  \begin{cases}
1						& x\in[0,x_c]\\
\frac{1}{2}+\frac{\cos\left(\frac{x-x_c}{x_M - x_c}\pi\right)}{2}		& x\in(x_c,x_M)\\
0						& x\in[x_M,\infty).
\end{cases}\label{eq:velmatch}
\end{align}
Function $\Theta(x,y)$ represents a `measure' of the direction alignment of the two non-zero 3D vectors $x$ and~$y$. In particular, $\Theta(x,y)$ is $1$ if $x$ and $y$ are parallel with the same direction, $\tfrac{1}{2}$~if they are orthogonal, and $0$ if they are parallel with opposite direction. Note that $\Theta(x,y)$ is equivalent to $\tfrac{1}{2}(1+\cos\theta)$ with $\theta$ being the angle between vectors $x$ and $y$. 

Function $\Lambda(x,x_c,x_M)$ `measures' how small $x$ is. If $x\leq x_c$ then $x$ is considered `small enough' and, therefore, $\Lambda = 1$. If $x\in(x_c,x_M)$ then $\Lambda$ strictly monotonically varies from $1$~to~$0$. If $x\geq x_M$, then $\Lambda=0$. The shape of $\Lambda$ is depicted in~\cref{fig:controlStrategy:Lambda}.

\begin{figure}[t]
\centering
\includegraphics[width=0.8\columnwidth]{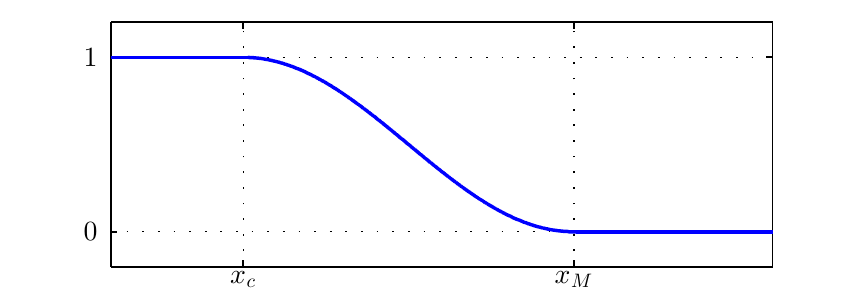}
\caption[Sketch of the function $\Lambda$]{Sketch of the function $\Lambda(x,x_c,x_M)$ for fixed $x_c$ and $x_M$.}
\label{fig:controlStrategy:Lambda}
\end{figure}

Having introduced these functions, we now define the \emph{force direction alignment} of the $i$-th robot as
\begin{equation}\label{eq:Theta}
\Theta_i=\Theta(f_i^\lambda,f_\text{travel}(q_i,\gamma_i,v_i^\text{cruise})),
\end{equation}
and note that $\Theta_i$ can be locally computed by the $i$-th robot. The quantity $\Theta_i$ thus represents an index in
$[0,1]$ measuring the degree of conflict among the directions of the \connectivityForce{} and the \travelingForce{}.

When $\gamma_i\neq\ $\valueFontNoSizeSmall{null}, we also define the absolute tracking error as 
\begin{align}
e_i = (1-\alpha_{\Lambda})\|v_i^\gamma(v_i^\text{cruise},q_i^\gamma) - v_i\| +\alpha_{\Lambda} \|q_i^\gamma - q_i\|,
\end{align}
with $\alpha_{\Lambda} \in[0,1]$ being a constant parameter modulating the importance of the velocity tracking error w.r.t. the position tracking error. The \emph{traveling efficiency} is then defined as
\begin{equation}\label{eq:traveling_efficiency}
\Lambda_i = \Lambda\left(e_i, x_c, x_M\right),
\end{equation}
where $0\le x_c < x_M < \infty$ are two user-defined thresholds representing the point at which the traveling efficiency $\Lambda_i$ starts to decrease and the maximum tolerated error after which the traveling efficiency vanishes. In this way it is possible to evaluate how well a traveler can follow its desired planned path according to a suitable combination of velocity and position accuracy. It is important to note that the value $\Lambda_i=1$ does not imply an exact tracking of the path, but it still allows a small tracking tolerance (dependent on the parameter $x_c$). Similarly, the value $\Lambda_i=0$ does not imply a complete loss of path tracking, but it represents the possibility of a tracking error higher than a maximum threshold (dependent on $x_M$).

In order to meet Assumption~\ref{assumption:primeTravelerReachesTarget}, we are only interested in the traveling efficiency of the current \primeTraveler{} for monitoring whether (and how much) its exploration task is held back by the presence/motion of the \secondaryTravelers{}. From now on we then denote this value as $\Lambda_p$, where
\[
p = i \quad s.t. \;\; \varFontNoSizeSmall{state}_i =\valueFontNoSizeSmall{prime-traveler}.
\]

This quantity is not in general locally available to every robot in the group, and therefore a simple decentralized algorithm is used for its propagation to avoid a flooding step. Among many possible choices we opted for using the following well-known consensus-based propagation~\citep{2003-OlfMur}:
\begin{align}
\begin{aligned}\label{eq:Lambda_estimate}
\dot{\hat\Lambda}_p^i &= k_\Lambda\sum_{j\in\mathcal{N}_i}({\hat\Lambda}_p^j - {\hat\Lambda}_p^i) \;\; &\text{if} \;\; i\neq p\phantom{.}\\
{\hat\Lambda}_p^i &= \Lambda_i \;\;&\text{if} \;\; i= p. 
\end{aligned}
\end{align}
This distributed estimator lets $\hat\Lambda_p^i$ track $\Lambda_p$ for all $i$ that hold $\varFontNoSizeSmall{state}_i\neq\valueFontNoSizeSmall{prime-traveler}$ with an accuracy depending on the chosen gain $k_\Lambda$. Notice that, for a constant $\Lambda_p$, the convergence of this estimation scheme is exact. Furthermore, since $\Lambda_p\in[0,1]$, ${\hat\Lambda}_p^i$ is then saturated so as to remain in the allowed interval despite the possible transient oscillations of the estimator. Instead of this simple consensus, one could also resort to a PI average consensus estimator~\citep{2006-FreYanLyn} to cope with presence of a time-varying signal. However, for simplicity we relied on a simple consensus law with 
less parameters to be tuned, and with, nevertheless, a satisfying performance as extensively shown in our simulation and experimental results.

Hence, every \secondaryTraveler{} can locally compute $\Theta_i$ and build an estimation $\hat\Lambda_p^i$ of $\Lambda_p$.
In order to consolidate these two quantities into a single value, we define the function \mbox{$\rho:[0,1]\times[0,1]\times[1,\infty)\to [0,1]$} as:
\begin{align}
 \rho(x,y,\sigma) = \left(1-x\right) y^\sigma + x\left(1-(1-y)^\sigma\right),
\end{align}
where $1\leq\sigma<\infty$ is a constant parameter. Gain $\rho_i$ is then obtained from $\Theta_i$ and $\hat\Lambda_p^i$ as
\begin{equation}\label{eq:gain}
\rho_i = \rho(\Theta_i,{\hat\Lambda}_p^i,\sigma)
\end{equation}
with $1\leq\sigma<\infty$ being a tunable parameter.

\begin{figure}[t]
\centering
{\includegraphics[width=0.49\columnwidth]{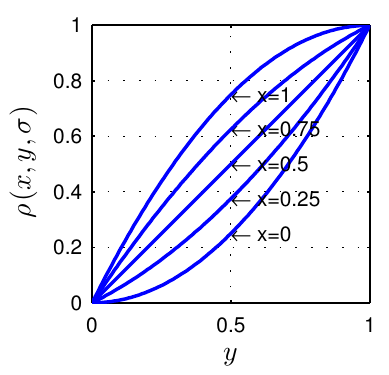}}{\includegraphics[width=0.49\columnwidth]{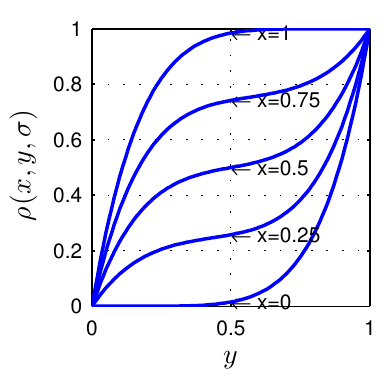}}\\
{\includegraphics[width=0.49\columnwidth]{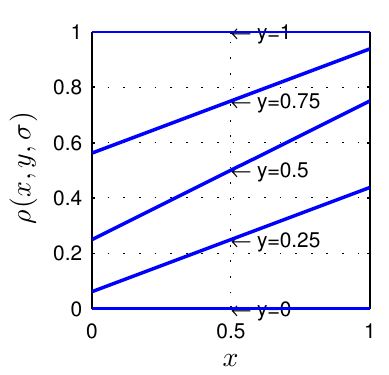}}{\includegraphics[width=0.49\columnwidth]{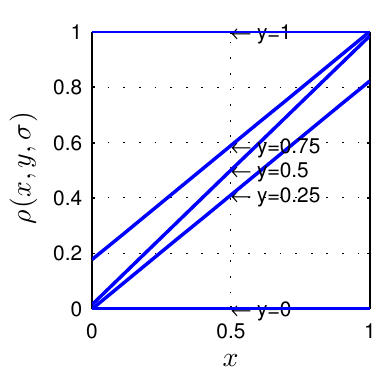}}\\
{\includegraphics[width=0.49\columnwidth]{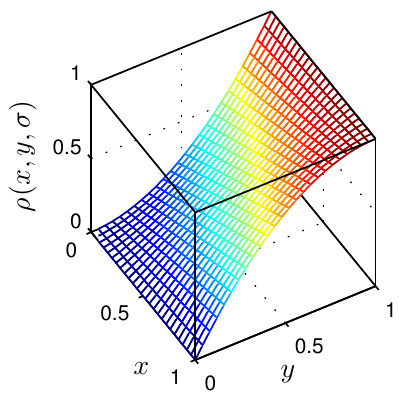}}{\includegraphics[width=0.49\columnwidth]{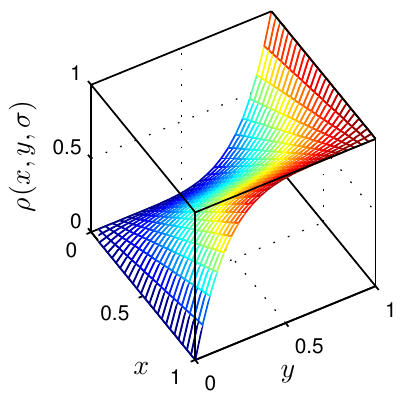}}\\
\caption[Function $\rho$ for two values of $\sigma$]{
Function $\rho$ for $\sigma=2$ (left side) and $\sigma=6$ (right side). The motion controller exploits this function by plugging the force direction alignment in the $x$ argument, and the estimate of the traveling efficiency of the current \primeTraveler{} in the $y$ argument.
}
\label{fig:rho}
\end{figure}

The reasons motivating this design of gain $\rho_i$ are as follows: $\rho_i$ is a smooth function of $\Theta_i$ and ${\hat\Lambda}_p^i$ possessing the following desired properties
(see also~\cref{fig:rho})
\begin{enumerate}
\item ${\hat\Lambda}_p^i=1$ $\Rightarrow$ $\rho_i=1$: if the traveling efficiency of the \primeTraveler{} is $1$ then every \secondaryTraveler{} sets $f_i=f_\text{travel}(q_i,\gamma_i,v_i^\text{cruise})$;
\item ${\hat\Lambda}_p^i=0$ $\Rightarrow$ $\rho_i=0$: if the traveling efficiency of the \primeTraveler{} is $0$ then every \secondaryTraveler{} sets $f_i=0$;
\item $\rho_i$ monotonically increases w.r.t. ${\hat\Lambda}_p^i$ for any $\Theta_i$ and $\sigma$ in their domains;
\item $\rho_i$ constantly increases w.r.t. $\Theta_i$ for any ${\hat\Lambda}_p^i\in(0,1)$ and $\sigma>1$; 
\item if $\sigma=1$ then $\rho_i={\hat\Lambda}_p^i$ for any $\Theta_i\in[0,1]$
\item if $\sigma\to\infty$ then $\rho_i\to\Theta_i$ for any ${\hat\Lambda}_p^i\in(0,1)$.
\end{enumerate}

Summarizing, gain $\rho_i$ mixes the information of both the force direction alignment and the traveling efficiency of the \primeTraveler{}, with more emphasis on the first or the second term depending on the value of the parameter $\sigma$. 
Nevertheless, the traveling efficiency ${\hat\Lambda}_p^i$ is always predominant at its boundary values ($0$ and $1$) regardless of the value of $\sigma$. 
This means that, whenever the estimated travel efficiency of the \primeTraveler{} is ${\hat\Lambda}_p^i=0$ and robot $i$ is a \secondaryTraveler{}, its \travelingForce{} is scaled to zero and, therefore, robot $i$ only becomes subject to the \connectivityAndDampingForce{}. Therefore, in this situation the motion of all \secondaryTravelers{} results dominated by the \primeTraveler{}, which is then able to execute its planned path towards its target location. On the other hand, when ${\hat\Lambda}_p^i=1$, the \primeTraveler{} has a sufficiently high traveling efficiency despite the \secondaryTraveler{} motions. Therefore, every \secondaryTraveler{} is free to travel along its own planned path regardless of the direction alignment between \travelingAndConnectivityForce{}.

We conclude noting that the main goal of the machinery defined in~\cref{sec:ctrlLoop}--\cref{sec:rho} is to ensure that the motion controller meets the requirements defined in~\cref{assumption:primeTravelerReachesTarget}. 
Although some of the steps involved in the design of the \travelingForce{} $f_i$ have a `heuristic' nature, the proposed algorithm is quite effective in solving the multi-target exploration task (in a decentralized way) under the constraint of connectivity maintenance, as proven by the several simulation and experimental results reported in the next section.

\section{Simulations and experiments}\label{sec:simExp}

In this section, we report the results of an extensive simulative and experimental campaign meant to illustrate and validate the proposed method. 
The videos of the simulations and experiments can be watched in the attached material and on
\url{http://homepages.laas.fr/afranchi/videos/multi_exp_conn.html}.

All the simulation (and experimental) results were run in 3D environments, although only a 2D perspective is reported in the videos for the simulated cases (therefore, robots that may look as `colliding' are actually flying at different heights, since their \connectivityForce{} prevents any possible inter-robot collision).

As robotic platform in both simulations and experiments we used small quadrotor UAVs (Unmanned Aerial Vehicles) with a diameter of 0.5\,m. This choice is motivated by the versatility and construction simplicity of these platforms, and also because of the good match with our assumption of being able to track any sufficiently smooth linear trajectory in 3D space.

We further made use of the SwarmSimX environment~\citep{2012m-LaeFraBueRob}, a physically-realistic simulation software. The simulated quadrotors are highly detailed models of the real quadrotors later employed in the experiments. The physical behavior of the robots itself and their interaction with the environment is simulated in real-time using PhysX\footnote{\url{http://www.geforce.com/hardware/technology/physx}}.

For the experiments, we opted for a highly customized version of the MK-Quadro\footnote{\url{http://www.mikrokopter.com}}. We implemented a software on the onboard microcontroller able to control the orientation of the robot by relying on the integrated inertial measurement unit. The desired orientation is provided via a serial connection by a position controller implemented within the ROS framework\footnote{\url{http://www.ros.org}} that can run on any generic GNU-Linux machine. The machine can be either mounted onboard or acting as a base-station. In the latter case a wireless serial connection with XBees\footnote{\url{http://www.digi.com/lp/xbee}} 
is used. We opted for the separate base station in order to extend the flight time thanks to the reduction of the onboard weight. The current UAV position used by the controller is retrieved from a motion capturing system\footnote{\url{http://www.vicon.com}}, while obstacles are defined statically before the task execution.

To abstract from simulations and experiments, we used the TeleKyb software framework, which is thoroughly described in~\cite{2013j-GraRieBueRobFra}. 
Finally, the desired trajectory (consisting of position, velocity and acceleration) is generated by our decentralized control algorithm implemented using Simulink\footnote{\url{http://www.mathworks.com/products/simulink/}} running in real-time at~$1$\,kHz.

\subsection{Monte Carlo Simulations}\label{sec:sim}


\def\figureSizeEmpty{0.433\textwidth}

\begin{figure*}
\hspace{-1.8cm}
\subfloat[\label{fig:empty:evo1}] {\includegraphics[angle=0, width=\figureSizeEmpty]{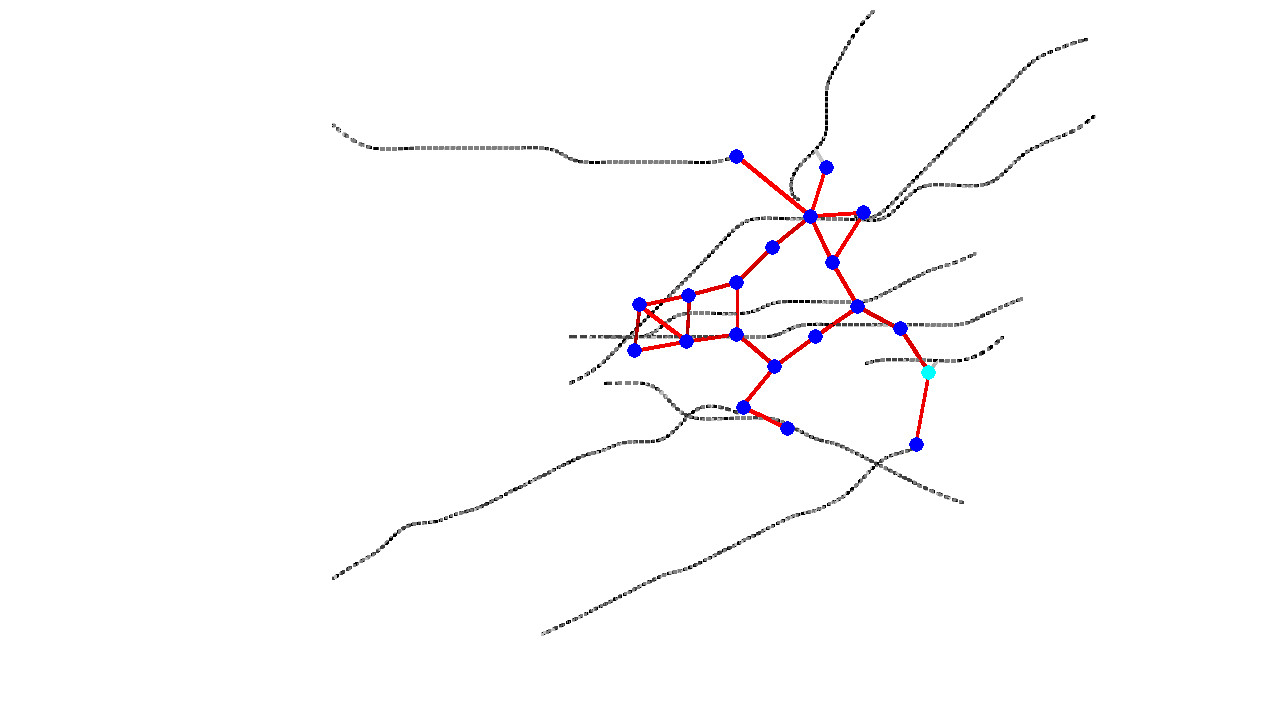}} 
\hspace{-1.1cm} 
\subfloat[\label{fig:empty:evo2}] {\includegraphics[angle=0, width=\figureSizeEmpty]{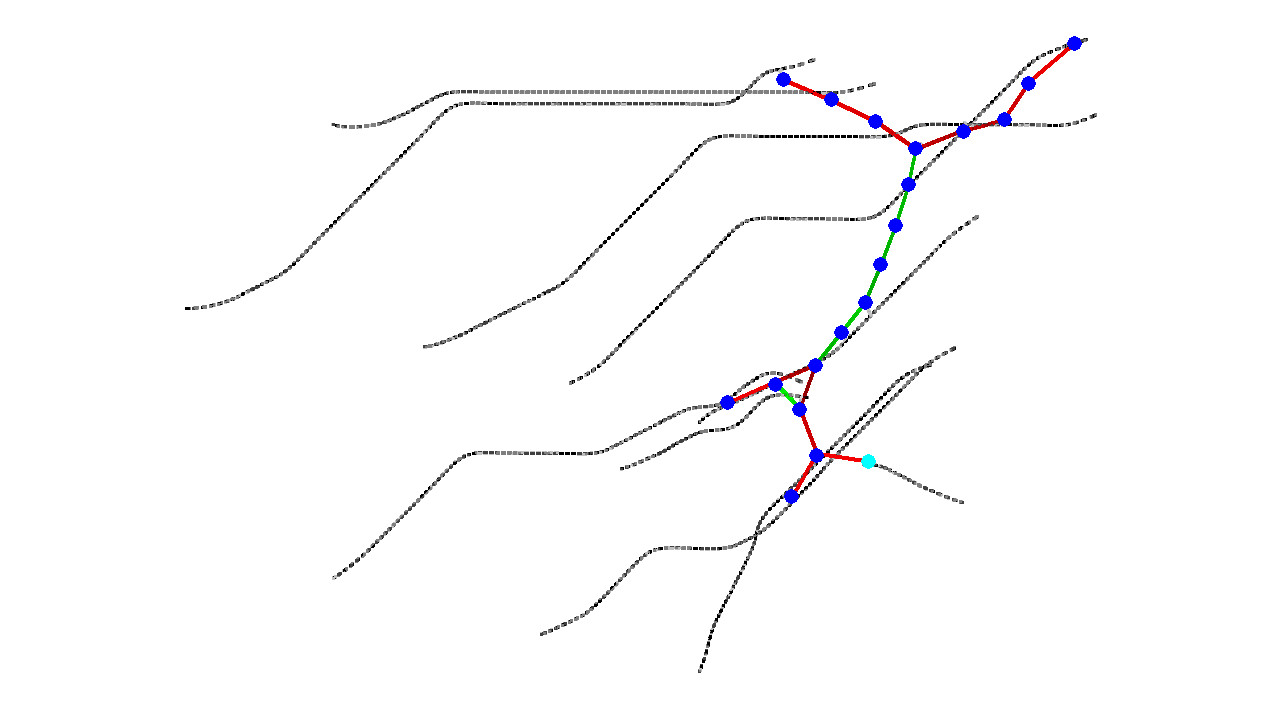}}
\hspace{-1.1cm} 
\subfloat[\label{fig:empty:evo3}] {\includegraphics[angle=0, width=\figureSizeEmpty]{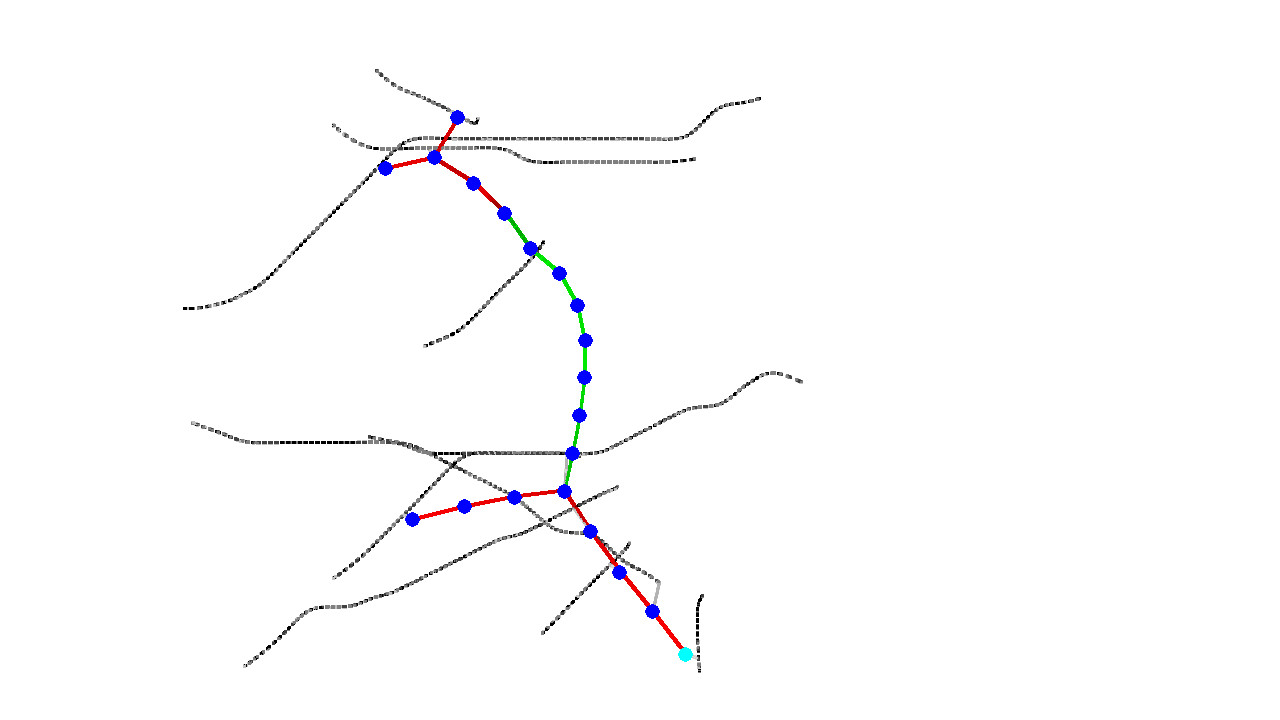}}
\caption{Snapshots of a simulation with 20 UAVs in empty space in three different consecutive time instants. The dotted black curves represent the planned path~$\gamma_i$ to the current target for each robot $i$ (if it has a current target). Blue dots are the robots, the turquoise dot is the current \primeTraveler{}. Line segments represent the presence of a connection link between a pair of robots with the following color coding: green -- well connected, red -- close to disconnection. The robots are able to concurrently explore the given targets and continuously maintain the connectivity of the interaction graph.
}
\label{fig:emptyEvo}
\end{figure*}



\def\figureSizeTown{0.331\textwidth}
\def\zoomed{_zoomed} 

\begin{figure*}
 \subfloat[\label{fig:town:evo1}]	{\includegraphics[angle=0, width=\figureSizeTown]{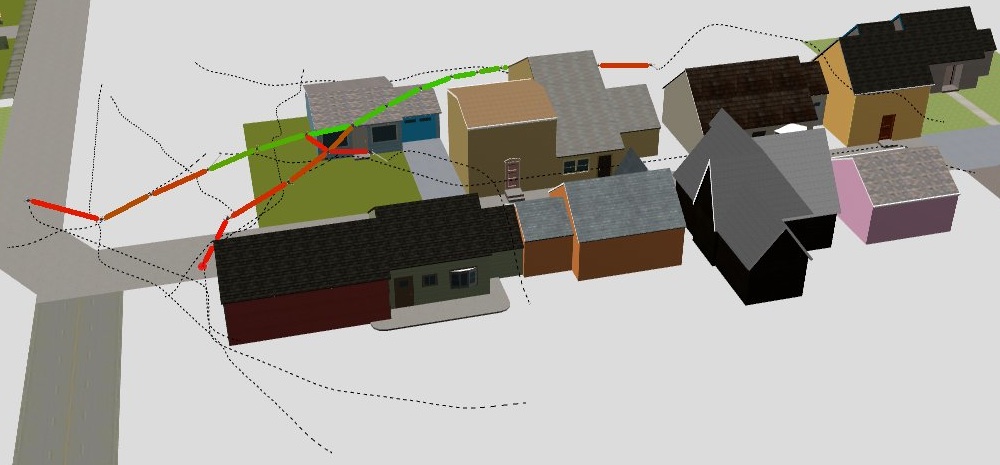}}\hfill
 \subfloat[\label{fig:town:evo2}]	{\includegraphics[angle=0, width=\figureSizeTown]{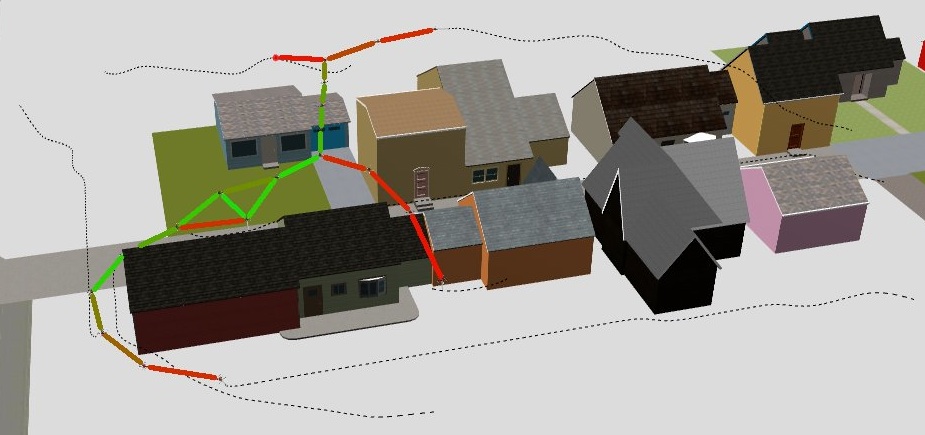}}\hfill
 \subfloat[\label{fig:town:evo3}]	{\includegraphics[angle=0, width=\figureSizeTown]{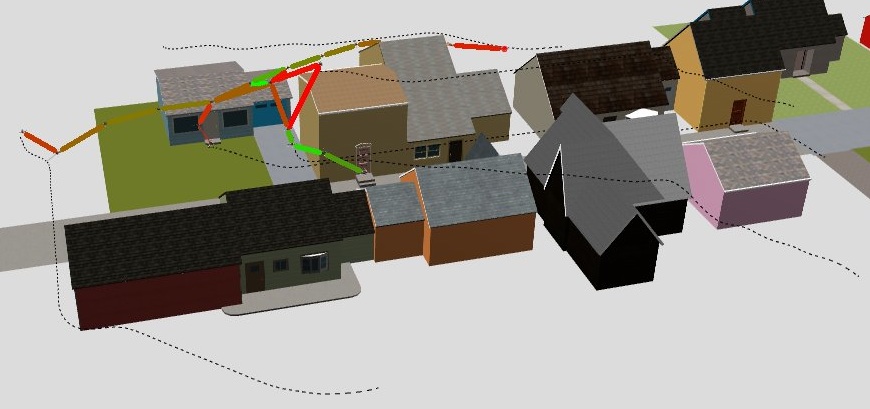}}
\caption{Three snapshots of consecutive time instants of a simulation in the town environment. Graphical notation similar to~\cref{fig:emptyEvo}}
\label{fig:townEvo}
\end{figure*}



\def\fileFormat{jpg}

\def\WhiteBalance{WhiteBalance} 

\def\figureSizeOffice{0.333\textwidth}
\begin{figure*}
\centering
 \subfloat[\label{fig:office:evo14}]	{\includegraphics[angle=0, width=\figureSizeOffice]{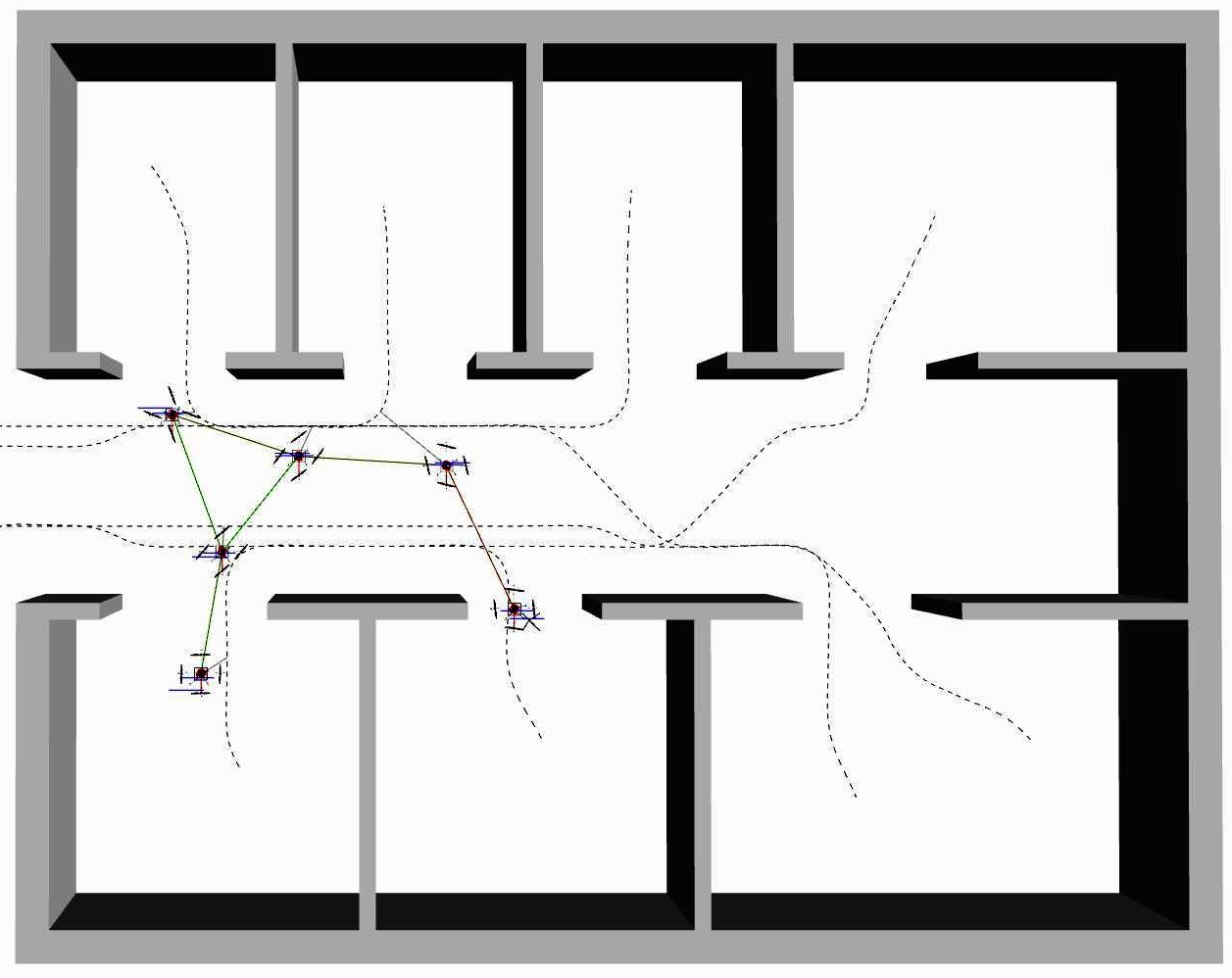}}\hfill
 \subfloat[\label{fig:office:evo24}]	{\includegraphics[angle=0, width=\figureSizeOffice]{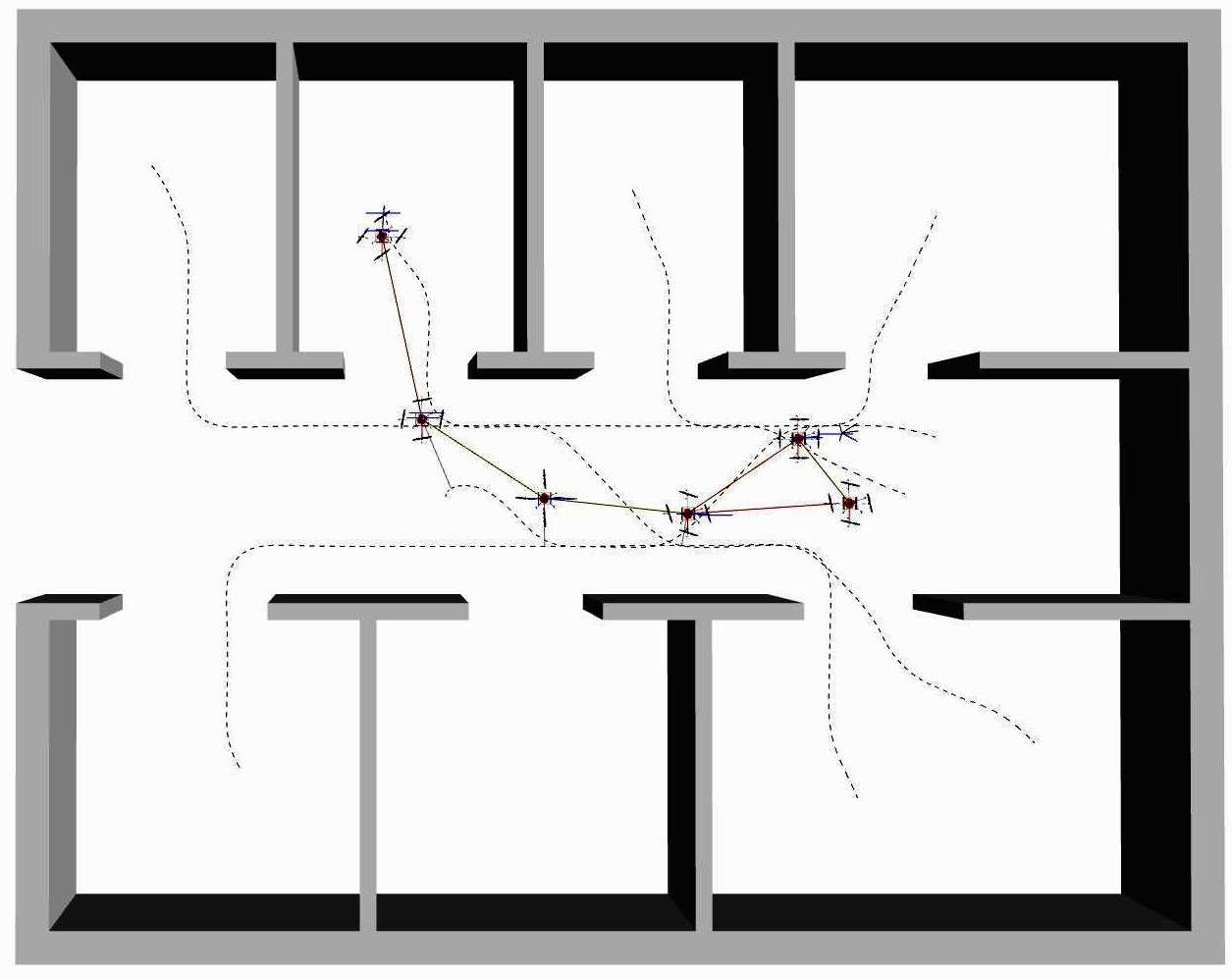}}\hfill
 \subfloat[\label{fig:office:evo34}]	{\includegraphics[angle=0, width=\figureSizeOffice]{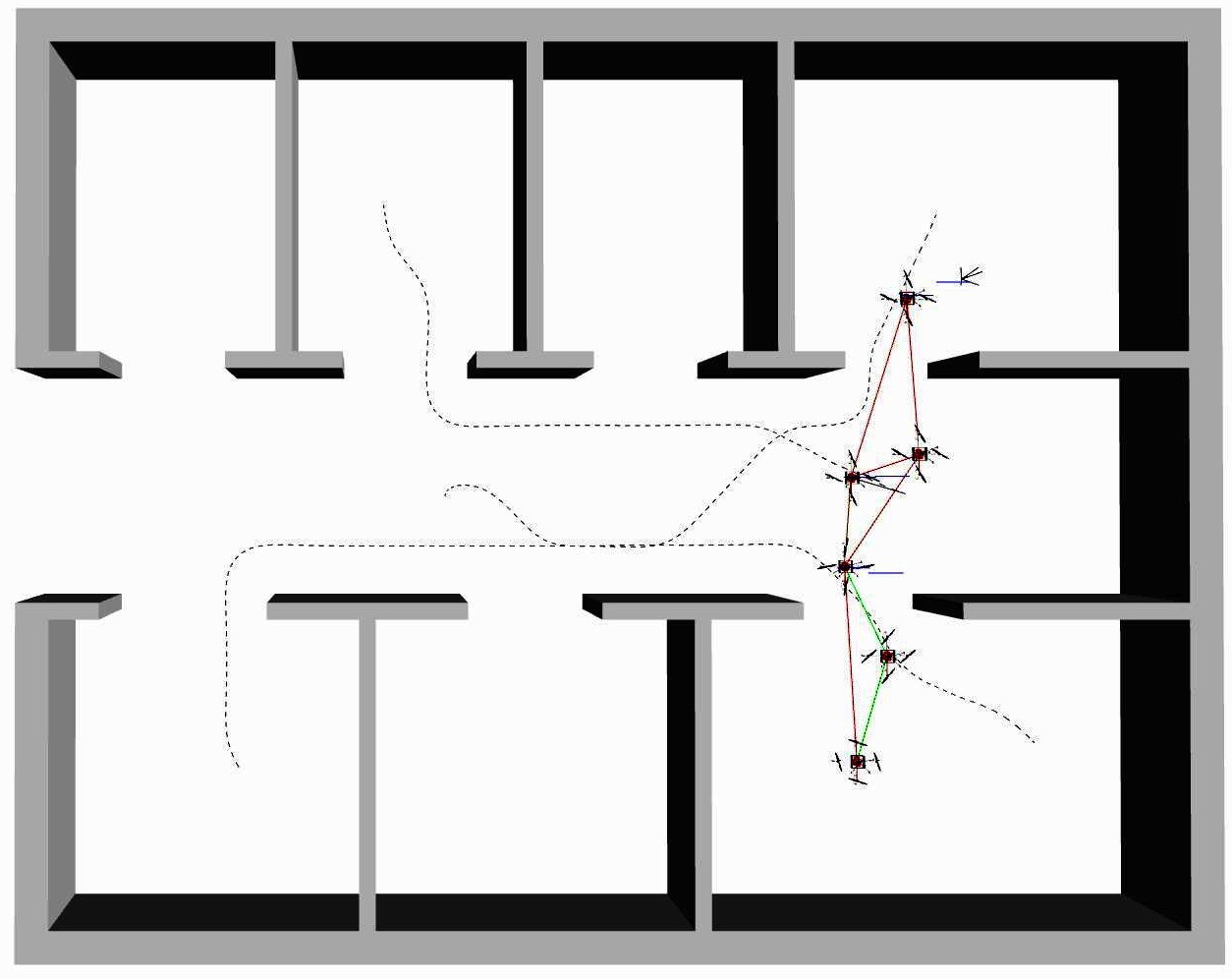}}
\caption{Snapshots of a simulation in the office-like environment in three consecutive time instants. Graphical notation similar to~\cref{fig:emptyEvo}}
\label{fig:officeEvo}
\end{figure*}

The proposed method has been extensively evaluated through randomized experiments in three significantly different scenarios. The first scenario is an obstacle free 3D space and three snapshots of the evolution of the proposed algorithm are presented in~\cref{fig:emptyEvo}. The second, a more complex, scenario includes a part of a town and is reported in~\cref{fig:townEvo}. 
The third is an office-like environment shown in~\cref{fig:officeEvo}. The size of the environments is $50\text{\,m}\times 70\text{\,m}$ for both the empty space and the town, and about $10\text{\,m}\times 15\text{\,m}$ for the office. Since the first two environments are outdoor scenarios and the office-like environment is indoor, two different sets of parameters were employed in the simulations. The values of the main parameters are listed in~\cref{tab:parametersSimulations}.


\begin{table}[hb]
\caption{Main parameters of the algorithm used in the 1800 randomized simulative trials in the different scenarios.}
\label{tab:parametersSimulations}
\centering
\begin{tabular}{c|c|c}
parameter											& empty space and town							& office											\\ \hline
$(R_s',R_s)$										& ($2.5$\,m,\ $6$\,m)							& ($1.1$\,m,\ $2.5$\,m)								\\
$(R_o,R_o')$										& ($0.75$\,m,\ $1.75$\,m)						& ($0.25$\,m,\ $0.6$\,m)							\\
$(R_c,R_c')$										& ($1$\,m,\ $2.5$\,m)							& ($0.8$\,m,\ $1.1$\,m)								\\
$(\lambda_2^{\text{min}}, \lambda_2^{\text{null}})$	& $(0,1)$										& $(0,1)$											\\ \hline
$R_\text{grid}$										& $0.75$\,m										& $0.25$\,m											\\
$\sigma$											& $3$											& $3$												\\
$v_i^\text{cruise}$									& $3$\,m/s										& $1$\,m/s for all $i$								\\
$(x_c,x_M)$                     & $(0.1, 0.6) v_i^\text{cruise}$  & $(0.1, 0.6) v_i^\text{cruise}$    \\
$\Delta t_i^k$										& $3$\,s for all $i$ and $k$					& $3$\,s for all $i$ and $k$						\\
$R_z$												& $1.8$\,m										& $1$\,m
\end{tabular}
\end{table}



The number of robots varied from $10$ to $35$. In every trial $3$ targets are sequentially assigned to $5$ robots and $2$ targets are sequentially assigned to other $5$ robots, for a total of $25$ targets per trial. The remaining robots are given no targets (i.e., they act always as \connectors{}).

The configuration of the given targets is randomized across the different trials. The same random configurations are repeated for every different number of robots in order to allow for a fair comparison among the results. In the following we refer to the robots with at least one target assigned during a trial as \explorers{}.

To summarize, we simulated a total number of $1800$ trials arranged in the following way: in each of the $3$ scenes, and for each of the $100$ target configurations in each scene, we ran a simulation with $6$ different numbers of robots, namely $10$, $15$, $20$, $25$, $30$, $35$. We encourage the reader to also watch the attached video where some representative simulative trials are shown. 

\begin{figure*}
\centering
\includegraphics[width=0.836\textwidth]{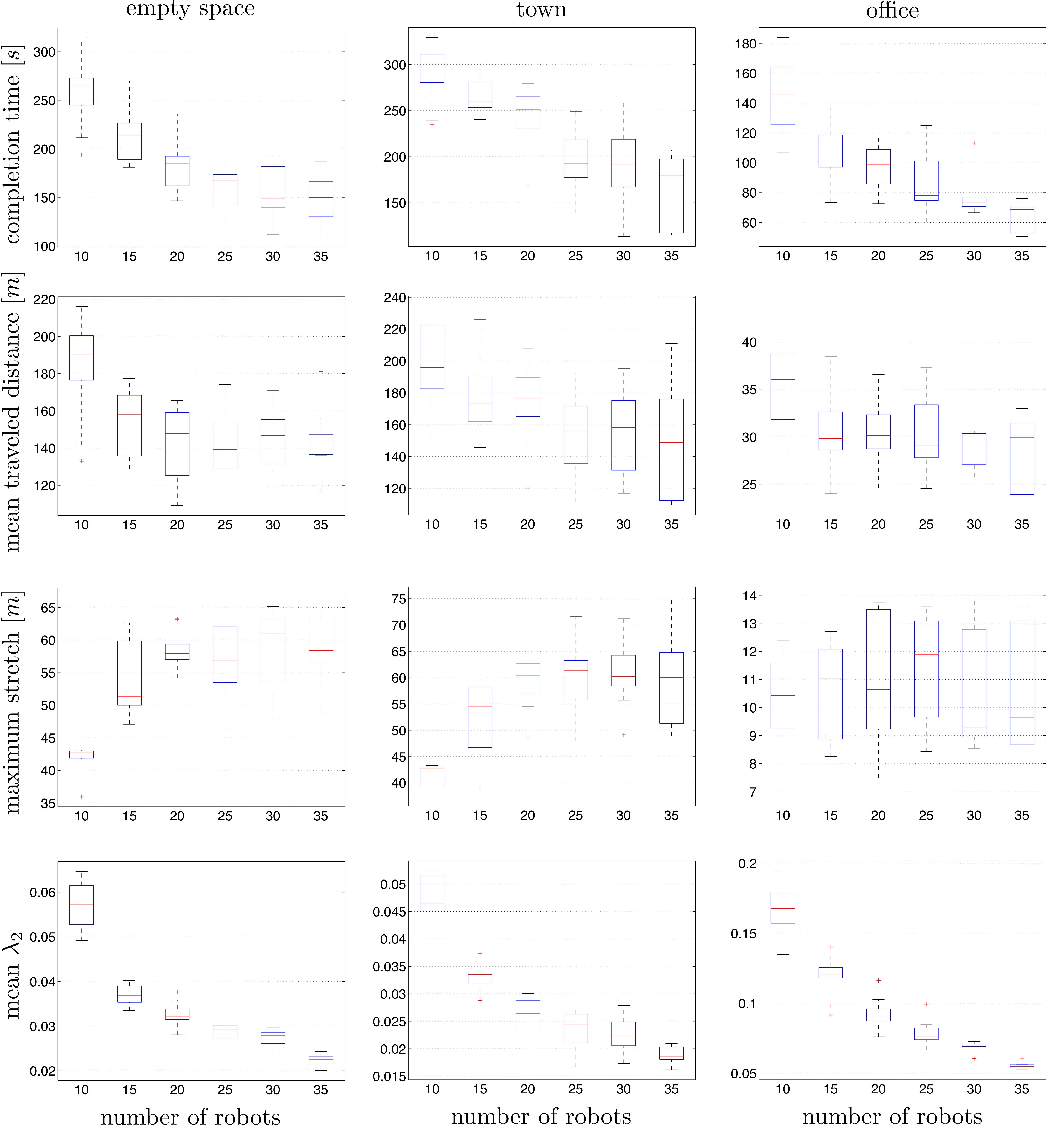}
\caption{Statistics of the completion times (first row), mean traveled distance of the traveling robots (second row), the maximum Euclidian distance between two traveling robots (third row) and mean $\lambda_2$ (forth row) versus the number of robots in the environments empty space (left column), town (middle column) and office (right column).}
\label{fig:statistics}
\end{figure*}

In~\cref{fig:statistics} we show the evolutions of the statistical percentiles of:
\begin{itemize}
\item the overall completion time, 
\item the mean traveled distance of the $10$ \explorers{}, 
\item the maximum Euclidean distance between two \explorers{}
\item the average of $\lambda_2(t)$ over time along the whole trial (we recall that the larger the $\lambda_2$ the more connected is the group of robots, refer to Appendix~\ref{app:connectivity}),
\end{itemize}
when the number of robots varies from $10$ (i.e., no \connectors{}) to $35$ (i.e., $25$ \connectors{}). Each column refers to one of the $3$ different scenarios.

An improvement with the increasing number of \connectors{} in all scenarios is obvious. The mean completion time (first row) roughly halves when comparing $0$ to $25$ \connectors{}.
Adding more than $25$ connectors will likely produce only minor improvement compared to the higher cost of having more robots, since the trend becomes basically flat. For this reason we did not perform simulations with a larger number of robots.

In the second row (mean traveled distance) one can see how, by already adding a few robots, a reduced mean traveled distance is obtained. This can be explained by the fact that the \connectors{} make the \explorers{} less disturbed by other \explorers{} with, therefore, more freedom to avoid unnecessary detours in reaching their targets. 

Another measure of the reduced task completion time is the maximum stretch among the \explorers{} (i.e., the maximum Euclidean distance between any two \explorers{}, see third row). The more connectors, the more stretch is allowed: \connectors{} in fact provide the support needed by the \explorers{} for keeping graph $\calG$ connected while freely moving towards their targets.
Only the office-like environment does not show this trend in the maximum stretch. This is due to the fact that the scene is relatively small and therefore the targets are not enough spread apart, so no bigger stretch is needed. 

The increased freedom of the \explorers{} is also evident in the plots of the average $\lambda_2(t)$ (fourth row). These plots show how the \connectors{} are also useful to let the \explorers{} move more freely even in small environments. In fact, the larger the amount of \connectors{}, the lower the mean $\lambda_2$: with more connectors the \explorers{} are more able to simultaneously travel towards their targets, thus bringing the topology of the group closer to less connected topologies (i.e., closer to tree-like topologies where the explorers would be the leaves of the tree). 
Clearly, this effect is independent of the maximum stretch, in fact the average $\lambda_2$ follows this decreasing trend also in the third office-like environment (third column).

\subsection{Experiments}\label{sec:exp}

The experiments involved $6$ real quadrotors and were meant to test the applicability of the algorithm in a real scenario. 
The parameters of the algorithm used in the experiments are reported in~\cref{tab:parametersExperiments}. 


\begin{table}[b]
\caption{Main parameters used in the experiments.}
\label{tab:parametersExperiments}
\centering
\begin{tabular}{c|c}
parameter												& value															\\ \hline
$(R_s',R_s)$											& ($1.4$\,m,\ $2.5$\,m)											\\
$(R_o,R_o')$											& ($0.5$\,m,\ $0.75$\,m)										\\
$(R_c,R_c')$											& ($1.0$\,m,\ $1.4$\,m)											\\
$(\lambda_2^{\text{min}}, \lambda_2^{\text{null}})$		& $(0,1)$														\\ \hline
$R_\text{grid}$											& $0.2$\,m														\\
$\sigma$												& $3$															\\
$v_i^\text{cruise}$										& $0.5$\,m/s													\\
$(x_c,x_M)$												& $\left(0.2, 0.7\right) v_i^\text{cruise}$						\\
$\Delta t_i^k$											& $3$\,s for all $i$ and $k$									\\
$R_z$													& $0.75$\,m
\end{tabular}
\end{table}



In order to obtain a $\bar{\mathcal{C}}^4$ trajectory smoother than~$q_i(t)$ and, thus, better matching the dynamics capabilities of a quadrotor UAV~\citep{2001-MisBenMsi}, we made use of a fourth order linear filter for each quadrotor:
\begin{equation}\label{eq:filter}
    \ddddot{q}^f_i(t) = -k_1\dddot{q}^f_i(t) -k_2 \ddot{q}^f_i(t) - k_3\dot{q}^f_i(t) + k_4(q_i(t)-q^f_i(t))
\end{equation}
that tracks the position of the original trajectory $q_i(t)$, while keeping the velocity, acceleration, and jerk low in the filtered trajectory.
The tunable gains were chosen as~$k_1=44$, $k_2=707$, $k_3=5090$, $k_4=13692$ for placing the (real negative) poles at approximately $-12$, $-13$, $-14$, $-15$, then resulting in a settling time of about $0.3$\,s within a band of $5\%$.

The resulting trajectory $q^f_i(t)$ is then provided in place of $q_i(t)$ as input trajectory for the robot $i$ as defined in~\cref{eq:second_order}, since it results very close to $q_i(t)$ as shown in~\cref{fig:exp:smoothPos}. However, at the same time, it provides a much smoother reference position signal to the quadrotor by filtering off occasional abrupt motions, as can be seen in the velocity and acceleration reported in~\cref{fig:exp:smoothVel,fig:exp:smoothAcc}.
\Cref{fig:experimentPlots:posError} shows the norms of the UAV errors while tracking the desired trajectory~$q^f_i(t)$. The average norm of all the quadrotors tracking errors during the whole experiment is $0.021$\,m, a few short peaks are above $0.06$\,m, and the highest peak is about $0.098$\,m.

\begin{figure}[t]
\centering
\subfloat[\label{fig:exp:smoothPos}]	{\includegraphics[width=0.9\columnwidth]{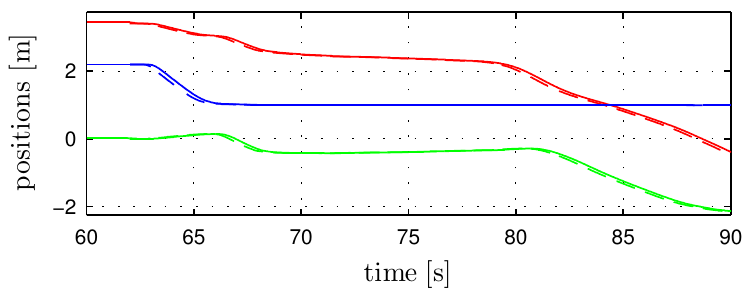}}
\\
\vspace{-0.9em}
\subfloat[\label{fig:exp:smoothVel}]	{\includegraphics[width=0.9\columnwidth]{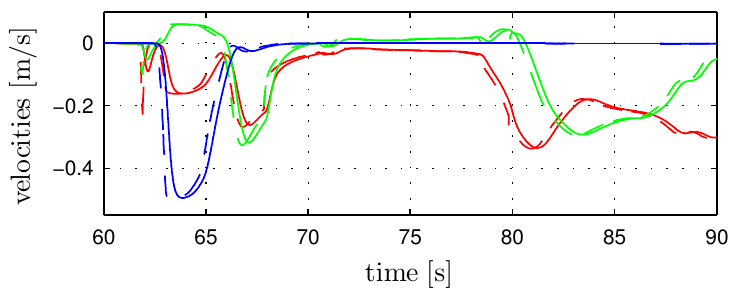}}
\vspace{-0.9em}
\\
\vspace{-0.9em}
\subfloat[\label{fig:exp:smoothAcc}]	{\includegraphics[width=0.9\columnwidth]{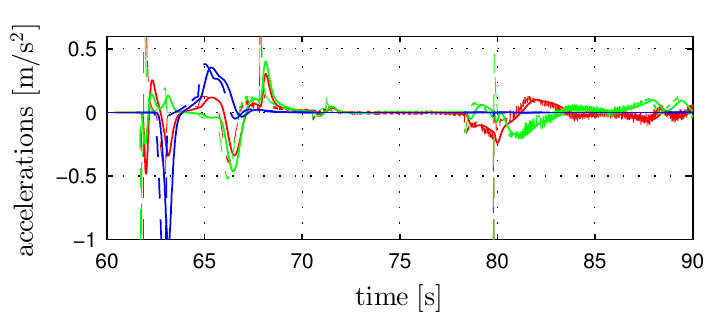}}
\vspace{-0.3em}
\caption{Position, velocity and acceleration of~\cameraRobot{} during a representative period of the experiment, where $q_i(t), \dot{q}_i(t), \ddot{q}_i(t)$ are plotted in dash and $q_i^f(t), \dot{q}_i^f(t), \ddot{q}_i^f(t)$ as solid curves. The $x$, $y$ and $z$ component is plotted in red, green and blue respectively.}
\label{fig:experimentPlots:smoothingAccVelPos}
\end{figure}
\begin{figure}
\centering
\includegraphics[width=0.9\columnwidth]{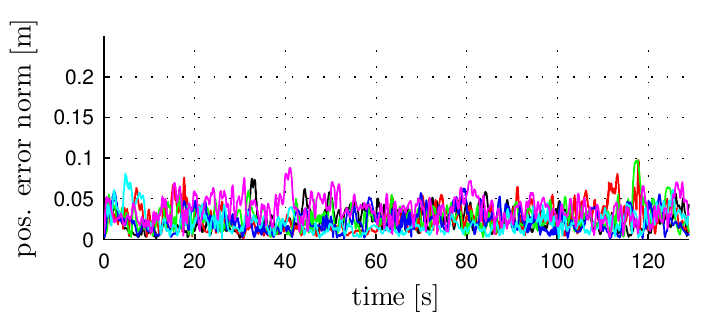}
\caption{Plots of the $6$ norms of the position error between $q_i(t)$ and the corresponding real quadrotor trajectory, for $i=1,\ldots,6$. The average error norm is 0.021\,m.}
\label{fig:experimentPlots:posError}
\end{figure}

For these experiments, we reproduced a scene similar to the office-like environment used in simulation, see~\cref{fig:experimentPaths}. The UAVs with IDs `$2$' and `$4$' (called \explorers{}) were given some targets, while the UAVs with IDs `$1$', `$3$', `$5$', and `$6$' (\connectors{}) had no target, for then a total of $6$ quadrotors.

The \emph{\cameraRobot{}} (with ID `4') carries an onboard camera and has two targets in total. Whenever it reaches one of its targets it gives a human operator direct control of the vehicle in the surrounding area of the target.
Then, with the help of the onboard camera, the human operator has the task of searching for an object in the environment. When the object is found by the human operator, the task at the target is considered completed, and the UAV switches back to autonomous control.
In order to allow full human control of \cameraRobot{} in the anchoring behavior, the UAV is temporarily decoupled from the point $q_4$, which is instead kept close to the target by the action of $f_\text{anchor}$ (as desired). 
The \emph{\pickNplaceRobot{}} (with ID `2') is instead fully autonomous and is assigned with a total of $4$ targets. At the first target location, the task is to pick up an object to then be released at the second target location. The same task is subsequently repeated with targets $3$ and $4$. We note, however, that the pick and place action is only virtually performed since the employed quadrotors are not equipped with an onboard gripper.
We also stress that all these operations are performed concurrently while keeping the topology of the group connected at all times.

\begin{table}[t]
\caption{Chronological list of important events in the experiment.}
\label{tab:chronological_events}
\centering
\begin{tabular}{|c|c|p{0.623\columnwidth}|}
\hline
\textbf{\cref{fig:experimentPaths}} & \textbf{Time} & \hfill \textbf{Events} \hfill~ \\
\hline
\multirow{7}{*}{\cref{fig:experimentPaths:1}} & 0\,s  &	The experiment starts. Both \explorers{} 
are assigned a target and since \pickNplaceRobot{} is closer to its goal, it becomes \primeTraveler{}, while \cameraRobot{} is \secondaryTraveler{}. \\
& 22\,s  &	\PickNplaceRobot{} arrives at its first target, where it should pick up an object. Therefore \pickNplaceRobot{} goes into \anchor{} and \cameraRobot{} becomes \primeTraveler{}. \\
\hline
\multirow{10}{*}{\cref{fig:experimentPaths:2}} & 29\,s  &	\PickNplaceRobot{} has completed the pick-up action and receives the point to release the object as a new target. Since \cameraRobot{} is still \primeTraveler{}, \pickNplaceRobot{} becomes \secondaryTraveler{}. \\
& 35\,s  &	\CameraRobot{} arrives at its target, where the human operator takes control of the UAV and use its camera to find a yellow picture on the wall. \PickNplaceRobot{} then becomes \primeTraveler{}. \\
& 56\,s  &	\PickNplaceRobot{} arrives at the target where it needs to release the object. \\ 
\hline
\multirow{12}{*}{\cref{fig:experimentPaths:3}} & 63\,s  &	\PickNplaceRobot{} has completed the releasing action and receives the next pick-up location. \CameraRobot{} is still under the control of the human operator and therefore in an \anchor{} state, so \pickNplaceRobot{} directly becomes \primeTraveler{}. \\
& 65\,s  &	The human operator finds the picture on the wall, \cameraRobot{} becomes autonomous again and starts to move towards its next target as \secondaryTraveler{}, since \pickNplaceRobot{} is \primeTraveler{}. \\
& 78\,s  &	\PickNplaceRobot{} arrives at the location where to pick up the second object and goes to the \anchor{} state. Hence \cameraRobot{} becomes \primeTraveler{}. \\
\hline 
\multirow{8}{*}{\cref{fig:experimentPaths:4}} & 85\,s  &	\PickNplaceRobot{} has completed the pick-up action and starts moving towards the releasing location as \secondaryTraveler{}. \\
& 100\,s  &	\cameraRobot{} arrives at its target, goes to \anchor{} state and is thus under control of the human operator, therefore \pickNplaceRobot{} becomes \primeTraveler{}. \\
& 119\,s  & \PickNplaceRobot{} arrives at its final target and switches into \anchor{}. \\
\hline
\multirow{8}{*}{\cref{fig:experimentPaths:5}} & 123\,s  & The human operator finds the searched object and \cameraRobot{} becomes \connector{} since it has no new target location. \\
& 126\,s  & \PickNplaceRobot{} has completed the releasing action and becomes a \connector{} since it has also no new target. \\
& 129\,s  & No UAV has a next target and the experiment ends.\\
\hline
\end{tabular}
\end{table}

A video of the experiment is present in the attached material and can be found under the given link above.

\Cref{tab:chronological_events} reports and describes all the relevant events taking place during an experiment in a chronological order. 


\Cref{fig:experimentPaths} shows the top-view of the \explorer{} paths for five representative time periods: $T_1=[0, 25]$\,s in \cref{fig:experimentPaths:1}, $T_2=[25, 60]$\,s in \cref{fig:experimentPaths:2}, $T_3=[60, 80]$\,s in \cref{fig:experimentPaths:3}, $T_4=[80, 120]$\,s in \cref{fig:experimentPaths:4}, and finally $T_5=[120, 129]$\,s in \cref{fig:experimentPaths:5}.
Every plot shows the (connected) graph topology of the group at the beginning of the time interval (dashed black lines) and the paths of the 2 \explorers{} (solid lines, blue for the \cameraRobot{} and red for the \pickNplaceRobot{}).
The initial positions of the robots are shown with colored circles and are labeled with the IDs of the corresponding robots.
The two small blue squares represent the two desired target locations of the \cameraRobot{}. The two green squares and the two red squares represent the two pick positions and release positions of \pickNplaceRobot{}, respectively. Finally, the vertical walls of the environment are shown in gray.
\Cref{fig:experimentPaths:z} on the other hand shows the $z$-coordinate of all the six quadrotors in order to understand the 3D motion in the 2D projections of \cref{fig:experimentPaths:1,fig:experimentPaths:2,fig:experimentPaths:3,fig:experimentPaths:4,fig:experimentPaths:5}.

\begin{figure}
\centering
\subfloat[\label{fig:experimentPaths:1}]{\includegraphics[width=0.479\columnwidth]{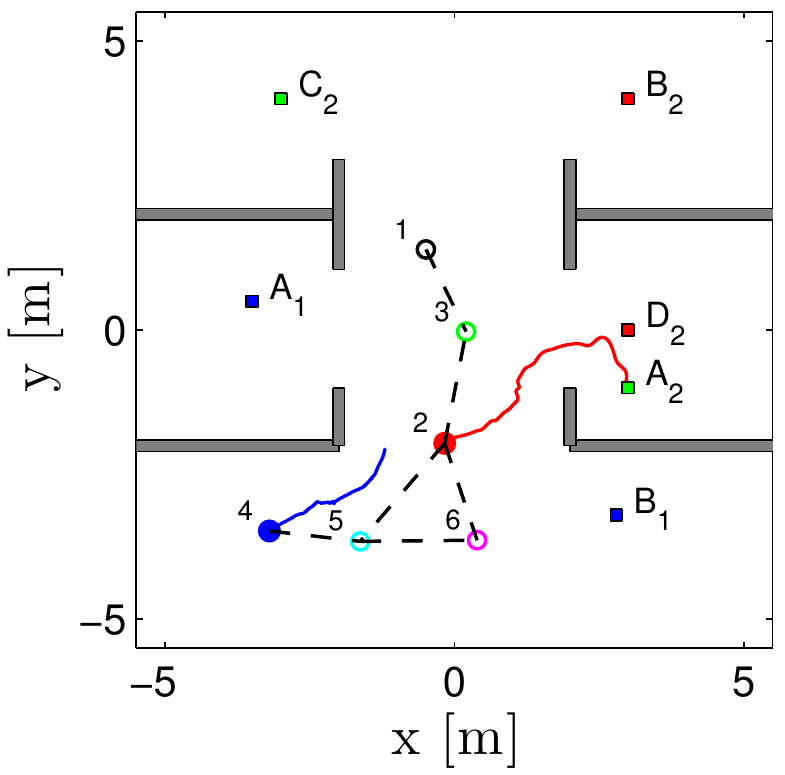}}\hfill
\subfloat[\label{fig:experimentPaths:2}]{\includegraphics[width=0.479\columnwidth]{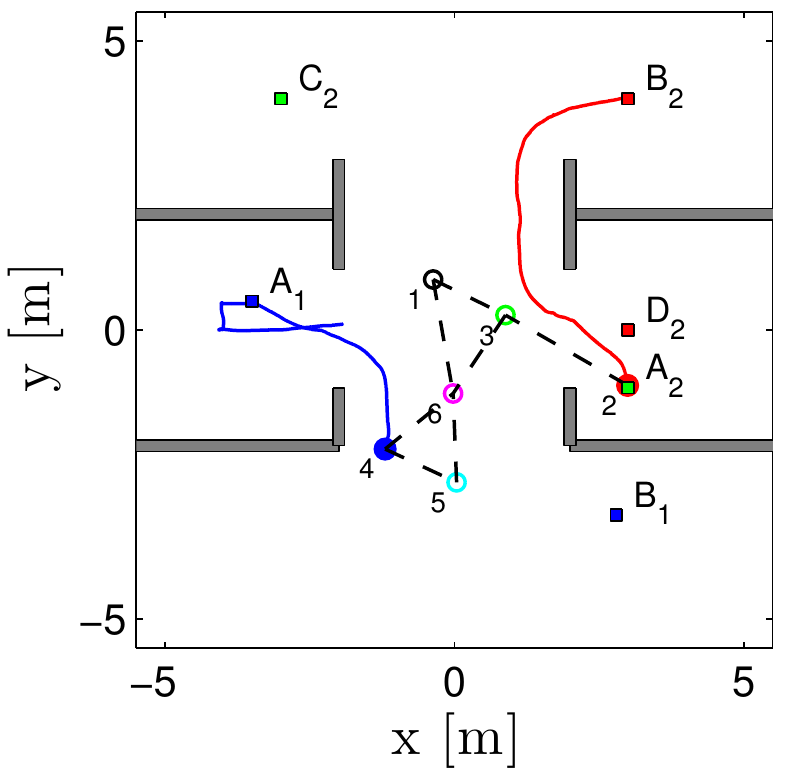}}\\
\subfloat[\label{fig:experimentPaths:3}]{\includegraphics[width=0.479\columnwidth]{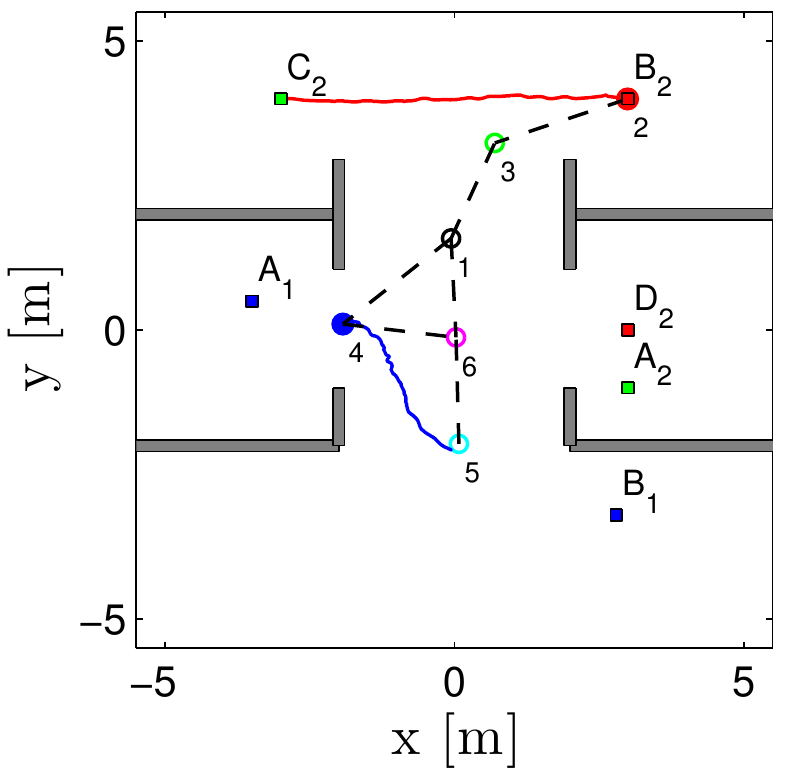}}\hfill
\subfloat[\label{fig:experimentPaths:4}]{\includegraphics[width=0.479\columnwidth]{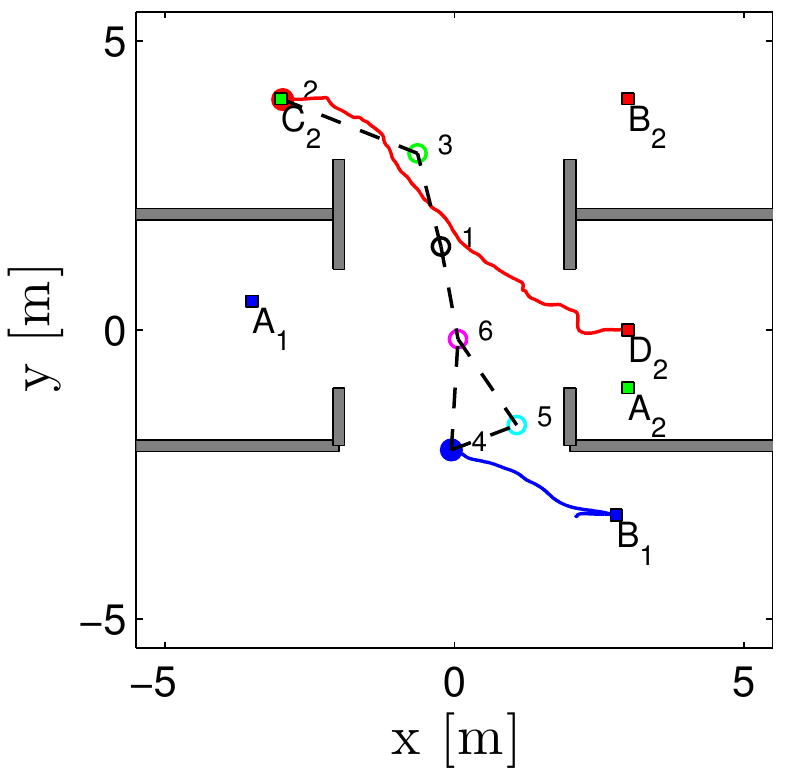}}\\
\subfloat[\label{fig:experimentPaths:5}]{\includegraphics[width=0.479\columnwidth]{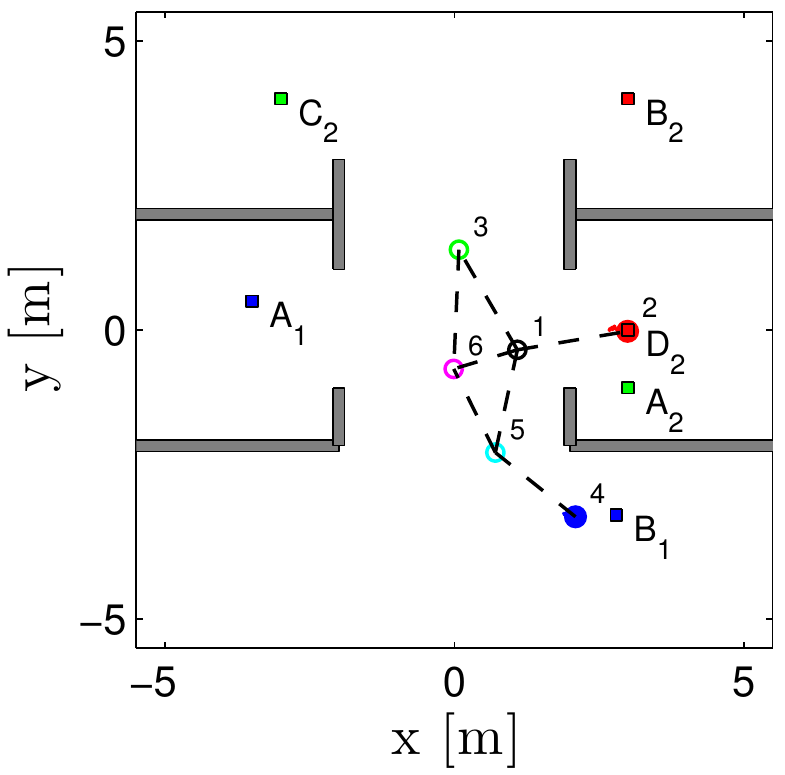}}\hfill
\subfloat[\label{fig:experimentPaths:z}]{\includegraphics[width=0.479\columnwidth]{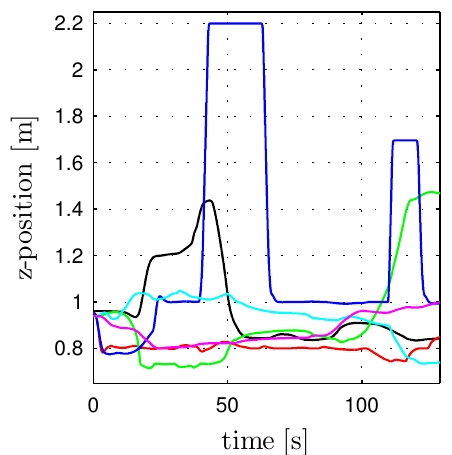}}
\caption{\mySubref{fig:experimentPaths:1}-\mySubref{fig:experimentPaths:5} Top view of the 3D paths of the \explorers{} (solid blue and red curves) during the experiment in five representative time intervals.
The interaction graph at the beginning of each interval is shown with black dashed lines. 
The ID of each robot is shown besides the circle representing the starting position of each robot at the beginning of the corresponding interval. Targets are represented with colored squares and walls are gray. The specific time intervals are: 
\mySubref{fig:experimentPaths:1}~$T_1=[0, 25]$\,s, \mySubref{fig:experimentPaths:2}~$T_2=[25, 60]$\,s, \mySubref{fig:experimentPaths:3}~$T_3=[60, 80]$\,s, \mySubref{fig:experimentPaths:4}~$T_4=[80, 120]$\,s and \mySubref{fig:experimentPaths:5}~$T_5=[120, 129]$\,s. \mySubref{fig:experimentPaths:z}~$z$-coordinate of the positions of all six quadrotors to help interpreting the 2D projection reported in the plots (and videos). The large vertical motion of \cameraRobot{} (blue) is due to the human operator flying this robot, while the subsequent descent is autonomously performed thanks to the proposed algorithm.}
\label{fig:experimentPaths}
\end{figure}

\begin{figure*}
\centering
\subfloat[\label{fig:experimentScreenshots:side}]		{\includegraphics[height=0.425\columnwidth]{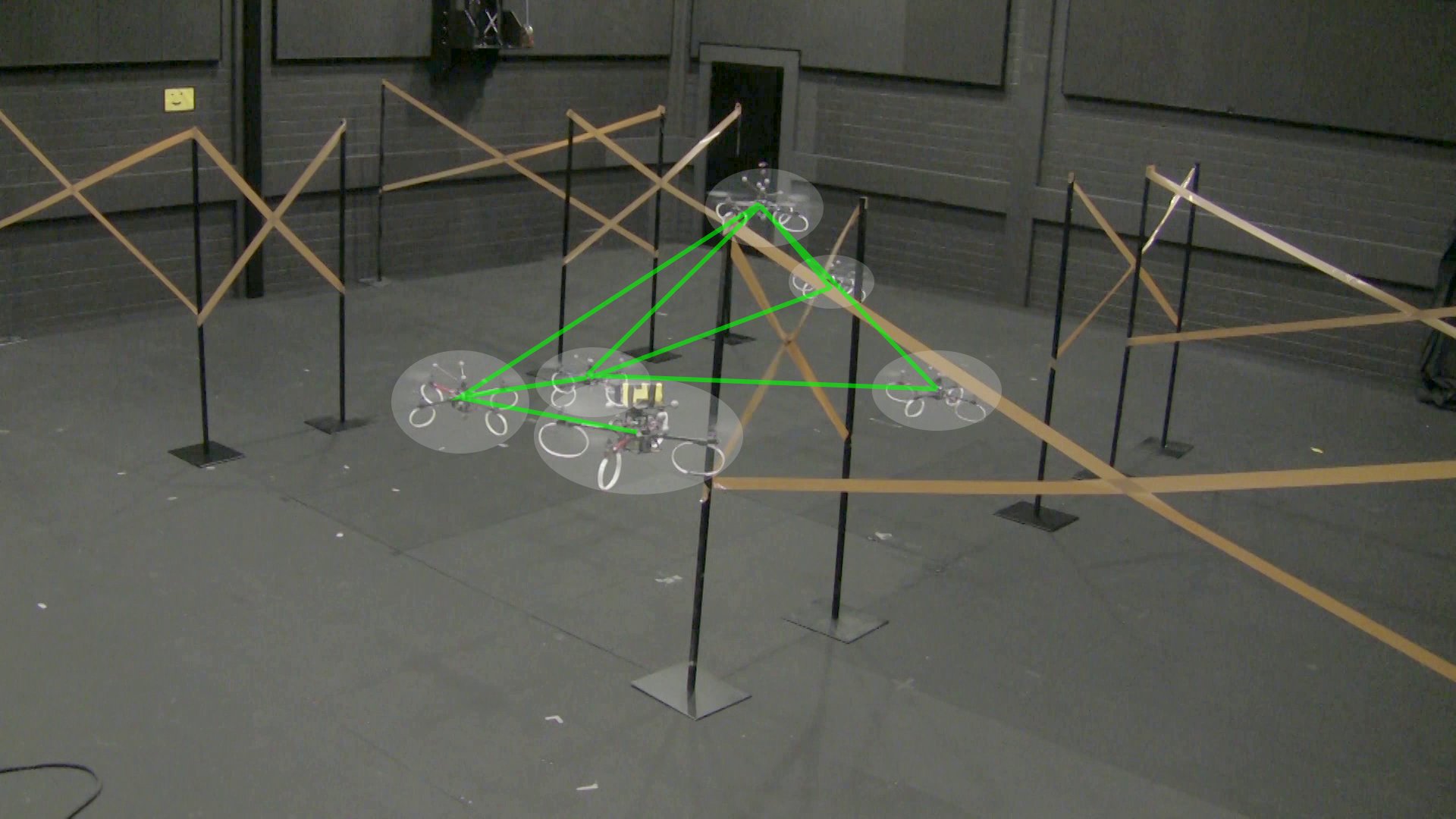}}\hfill
\subfloat[\label{fig:experimentScreenshots:onboard}]	{\includegraphics[height=0.425\columnwidth]{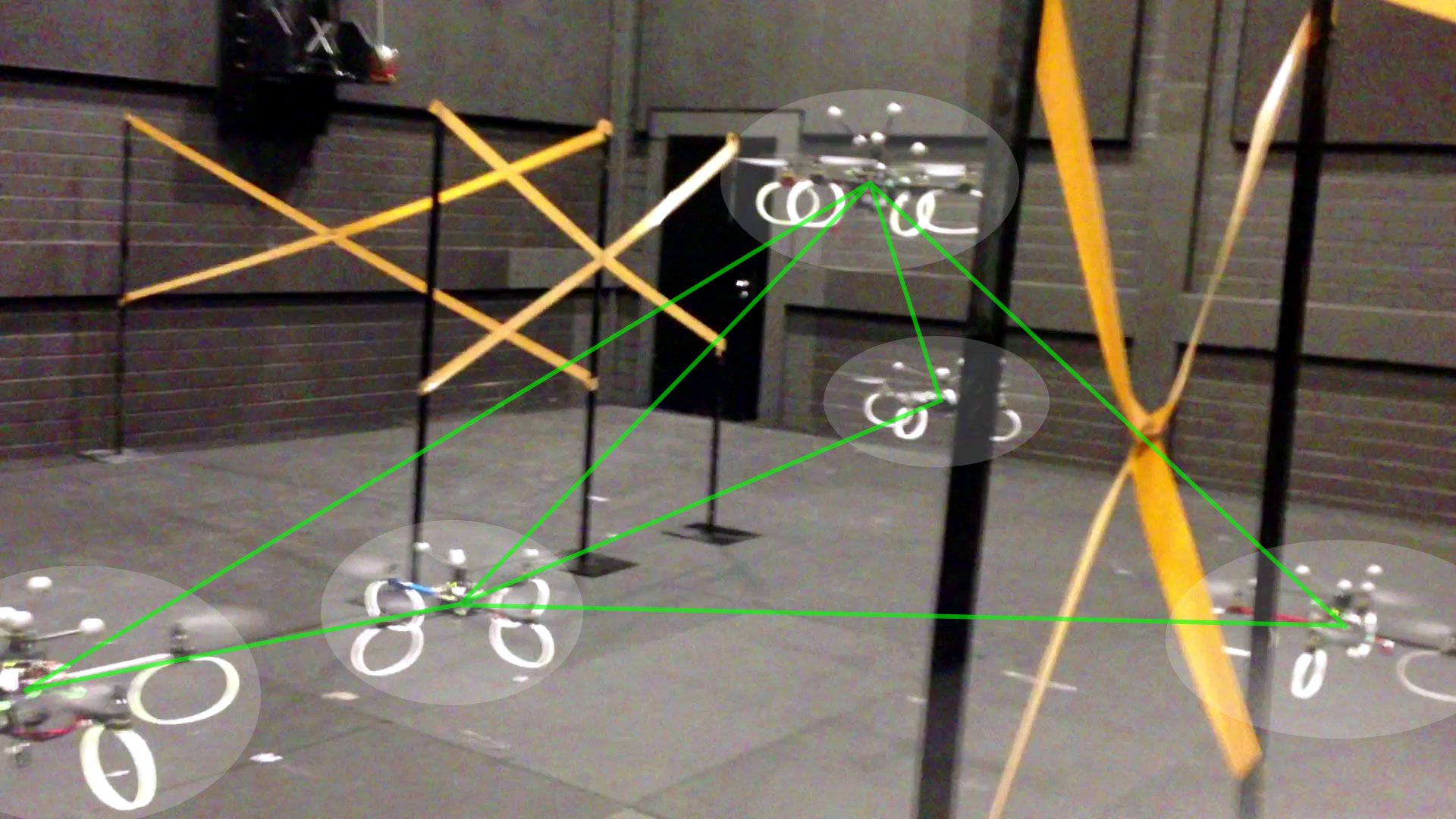}}\hfill
\subfloat[\label{fig:experimentScreenshots:SSX}]		{\includegraphics[height=0.425\columnwidth]{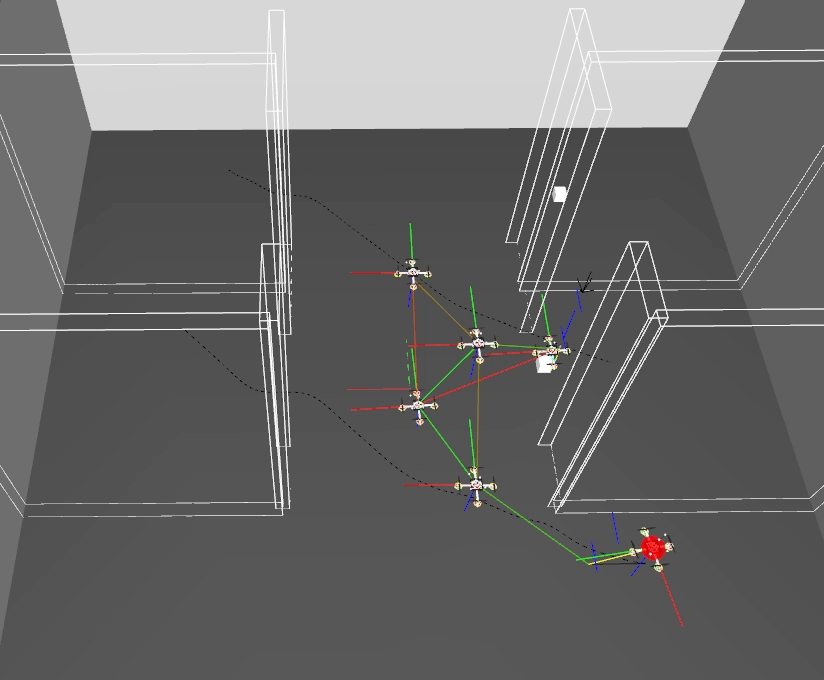}}
\caption{Three simultaneous screenshots of the experiment described in the text: \mySubref{fig:experimentScreenshots:side}~shows the side view of the scene from a fixed camera. Connections between UAVs (brightened areas) are overlayed as green lines. \mySubref{fig:experimentScreenshots:onboard}~shows the view taken from the onboard camera of the \cameraRobot{} using the same highlighting. \mySubref{fig:experimentScreenshots:SSX}~shows a 3D synthetic reconstruction of the robot positions and connections are shown with a line given in green when the weight is high, red shortly before a connection breaks and as a gradient in between. The robot that is marked with the red sphere is currently decoupled and controlled by the human operator.}
\label{fig:experimentPictures}
\end{figure*}


\begin{figure*}[t]
\begin{minipage}{0.32\textwidth}
\centering
\vspace{-0.2em}
\subfloat[\label{fig:experimentPlots:lambda2}]          {\includegraphics[width=\textwidth]{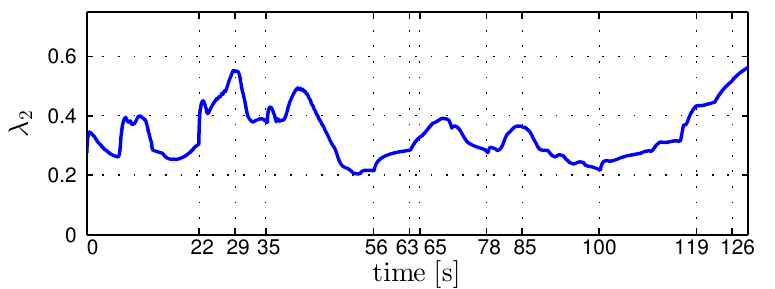}}\\
\subfloat[\label{fig:experimentPlots:numberOfLinks}]    {\includegraphics[width=\textwidth]{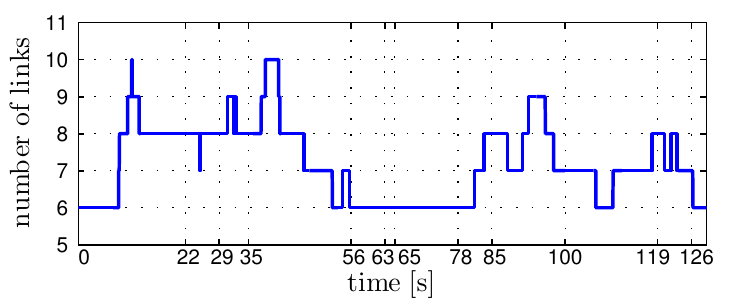}}\\
\subfloat[\label{fig:experimentPlots:stretch}]          {\includegraphics[width=\textwidth]{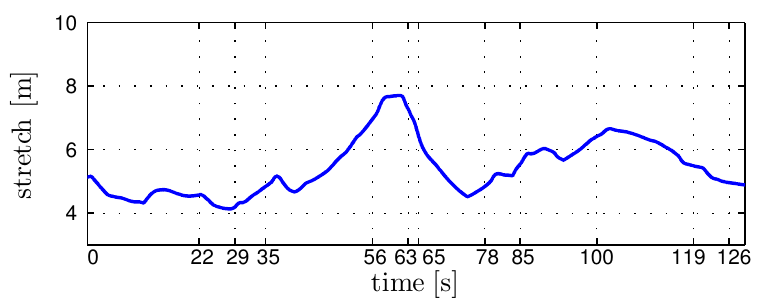}} 
\end{minipage}
\hfill
\begin{minipage}{0.32\textwidth}
\centering
\subfloat[\label{fig:experimentPlots:states}]           {\includegraphics[width=\textwidth]{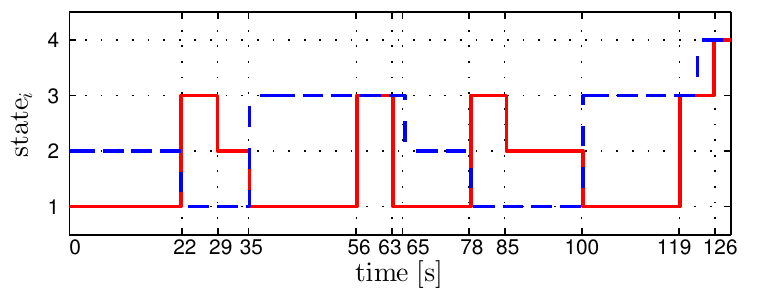}}\\
\vspace{-0.2em}
\subfloat[\label{fig:experimentPlots:posDiffAnchor}]    {\includegraphics[width=\textwidth]{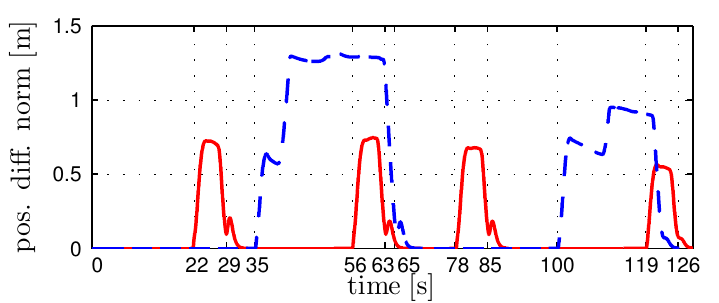}}\\
\vspace{-0.2em}
\subfloat[\label{fig:experimentPlots:Lambda_leader}]    {\includegraphics[width=\textwidth]{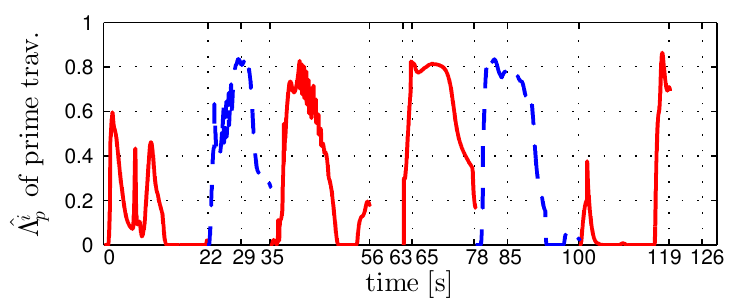}}
\end{minipage}
\hfill
\begin{minipage}{0.33\textwidth}
\centering
\subfloat[\label{fig:experimentPlots:Lambda_estimate}] {\includegraphics[width=\textwidth]{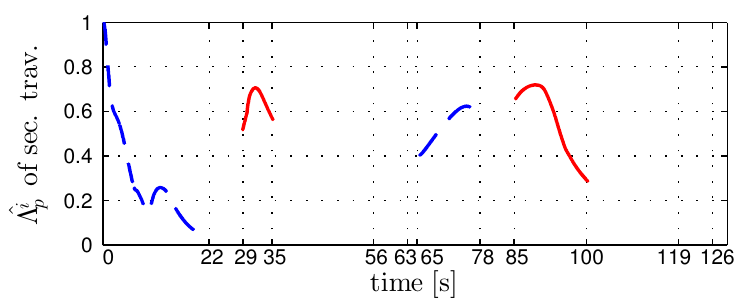}}\\
\vspace{-0.3em}
\subfloat[\label{fig:experimentPlots:Theta}]           {\includegraphics[width=\textwidth]{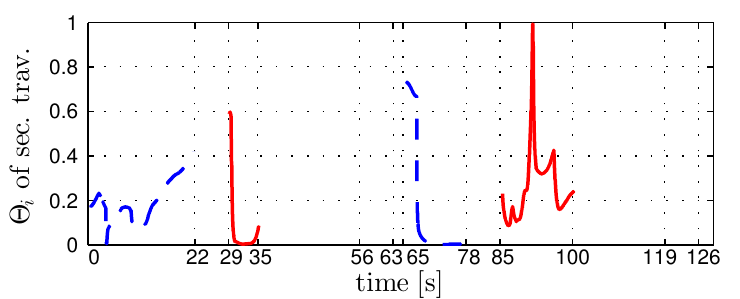}}\\
\vspace{-0.2em}
\subfloat[\label{fig:experimentPlots:rho}]             {\includegraphics[width=\textwidth]{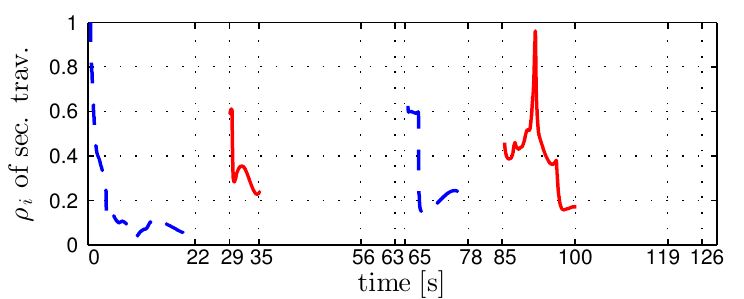}}
\end{minipage}
\caption{Behavior of different measurements during an experiment: 
\mySubref{fig:experimentPlots:lambda2}~$\lambda_2$ always keeps greater than zero, thus showing how the group remains connected at all times, 
\mySubref{fig:experimentPlots:numberOfLinks}~the number of links $|\mathcal{E}(t)|$ of the interaction graph $\mathcal{G}(t)$,
\mySubref{fig:experimentPlots:stretch}~the stretch of the formation given by the maximum Euclidean distance between any two quadrotors over time, 
\mySubref{fig:experimentPlots:states}~the exploration states with the following meaning: 1:~\primeTraveler{}, 2:~\secondaryTraveler{}, 3:~\anchor{}, 4:~\connector{},
\mySubref{fig:experimentPlots:posDiffAnchor}~the position difference between the virtual point of the connectivity maintenance and the commanded position to the quadrotors showing the decoupling as an \anchor{},
\mySubref{fig:experimentPlots:Lambda_leader}~the traveling efficiency of the current \primeTraveler{} (see~\eqref{eq:traveling_efficiency}),
\mySubref{fig:experimentPlots:Lambda_estimate}~the estimation of the traveling efficiency by the \secondaryTravelers{} (see~\eqref{eq:Lambda_estimate}),
\mySubref{fig:experimentPlots:Theta}~the force direction alignment for the \secondaryTravelers{} (see~\eqref{eq:Theta}),
\mySubref{fig:experimentPlots:rho}~the adaptive gain used by the \secondaryTravelers{} to scale down their \travelingForce{}~(see~\eqref{eq:gain}).
}
\label{fig:experimentPlots}
\end{figure*}

\Cref{fig:experimentPictures} shows three screenshots of the experiment: the lines between two quadrotors represent the corresponding connecting link as per graph $\calG$.

Finally, \cref{fig:experimentPlots} reports nine plots that capture the behavior of several quantities of interest throughout the whole experiment.  
As can be seen in \cref{fig:experimentPlots:lambda2}, the generalized algebraic connectivity eigenvalue $\lambda_2(t)$ (see Appendix~\ref{app:connectivity}) remains positive for any $t>0$, thus implying continuous connectivity of the graph $\calG$ as desired.
The time-varying number of edges in \cref{fig:experimentPlots:numberOfLinks} shows the dynamic reconfiguration of the group topology which ranges between topologies with $5$ edges (the minimum for having $\calG$ connected) and topologies with up to $10$ edges. This plot clearly shows how the adopted connectivity maintenance approach can cope with time-varying graphs. 
In~\cref{fig:experimentPlots:stretch}, we report the stretch of the group, defined as the maximum Euclidean distance between any two robots at a given time $t$. One can then appreciate how this stretch varies among $3.5$ and $7.5$ meters thus exploiting at most the allowable ranges of the experimental arena. Notice also how the stretch is in general larger when the number of links (and consequently $\lambda_2(t)$) is smaller. In fact the two peaks at about 60\,s and 103\,s occur when the robots are forced into a sparsely connected topology because the two \explorers{} have concurrently reached their farthest target pairs, i.e., ($A_1$,$B_2$) and ($B_1$,$D_2$).

\Cref{fig:experimentPlots:states} shows the \explorer{} states \varFontNoSizeSmall{state}$_2$ and \varFontNoSizeSmall{state}$_4$ over time, with a dashed blue line and solid red line, respectively. In the plot, the following code is used: $1={}$\primeTraveler{}, $2={}$\secondaryTraveler{}, $3={}$\anchor{} and $4={}$\connector{}. 
For $i=1,3,5,6$ it is \varFontNoSizeSmall{state}$_i=4$ for all $t\in[0,129]$. Notice that, because of the algorithm design, at most one \explorer{} has \varFontNoSizeSmall{state}$_i=1$ at any given time.

The temporary decoupling of the \explorers{} from the points $q_2$ and $q_4$ during their anchoring behavior can be appreciated in~\cref{fig:experimentPlots:posDiffAnchor}, where the 
Euclidian distance
between the real robot position in the trajectory and the corresponding $q_i(t)$ is shown, for $i=2,4$. \CameraRobot{} (solid red line) decouples four times in total, in correspondence of the $2$ pick-and-place operations, which gives rise to $4$ short peaks in the plot.
\PickNplaceRobot{} (dashed blue line) decouples two times in total, in correspondence of the $2$ human-in-the-loop operations, causing $2$ long peaks in the plot.

\Cref{fig:experimentPlots:Lambda_leader} shows the traveling efficiency $\Lambda_p$ of the current \primeTraveler{} with a dashed blue line when \cameraRobot{} is the \primeTraveler{} and with a solid red line when \pickNplaceRobot{} is the \primeTraveler{}. The estimation ${\hat\Lambda}_p^i$ of this value (see~\eqref{eq:Lambda_estimate}) by all robots that are currently not \primeTraveler{} is given in~\cref{fig:Lambda_propagation}. We chose $k_\Lambda=1$ resulting in a relatively slow propagation to show the additional robustness of our algorithm against this parameter (and the simple adopted consensus propagation), but clearly one could easily employ higher gains. To make it easier for the reader to understand the following discussion, we show again in~\cref{fig:experimentPlots:Lambda_estimate} the essential information of this last plot whenever a robot is a \secondaryTraveler{}. In~\cref{fig:experimentPlots:Theta,fig:experimentPlots:rho} the force direction alignment $\Theta_i$ (see~\eqref{eq:Theta}) and the adaptive gain $\rho_i$ (see~\eqref{eq:gain}) of the current \secondaryTraveler{} are shown. In the latter three plots a dashed blue line indicates when \cameraRobot{} is the \secondaryTraveler{}, and a solid red line when \pickNplaceRobot{} is the \secondaryTraveler{}. 

To fully understand the important features of our method, we now give a detailed description of the time interval $[0,22]$ in the~\cref{fig:experimentPlots:Lambda_leader,fig:experimentPlots:Lambda_estimate,fig:experimentPlots:Theta,fig:experimentPlots:rho}. A similar pattern can then be found in the rest of the experiment. In this time interval, the \pickNplaceRobot{} is the \primeTraveler{}, while \cameraRobot{} is a \secondaryTraveler{} (and the rest are \connectors{}). Due to the initial transient of its motion controller, the \primeTraveler{} starts with $\Lambda_p=0$ and quickly reaches $\Lambda_p=0.6$. Shortly after, the traveling efficiency decreases again since \pickNplaceRobot{} reaches the end of the area where it can freely move and, thus, needs to `pull' the other robots for preserving connectivity of $\calG$. This effect is propagated to the \cameraRobot{} as shown in~\cref{fig:experimentPlots:Lambda_estimate}. The force direction alignment between the \travelingForce{} and the \connectivityForce{} is shown in~\cref{fig:experimentPlots:Theta}. Combining these two plots with~\eqref{eq:gain} allows to understand the effect of~\cref{fig:experimentPlots:rho}. As can be seen, the \secondaryTraveler{} slows down its motion to around $10\%$ for roughly 5 seconds. This enables the \primeTraveler{} to travel faster again (see~\cref{fig:experimentPlots:Lambda_leader}). However, since the \pickNplaceRobot{} needs to move around the wall (see~\cref{fig:experimentPaths:1}) to reach its target, it needs to `pull' the other robots even more for preserving connectivity. Therefore, the traveling efficiency becomes zero and, although the direction alignment of the \secondaryTraveler{} becomes higher, the overall gain $\rho_i$ stays very low: this makes it possible for the \primeTraveler{} to eventually reach its target. We recall here that, according to~\cref{tab:parametersExperiments} and~\eqref{eq:velmatch}, $\Lambda_p=1$ as soon as the \primeTraveler{} achieves a speed of at least $80\%$ of its desired cruise speed (so the error is less than $20\%$), while $\Lambda_p=0$ means a speed of less than $30\%$ (an error of more than $70\%$), and \emph{not necessarily} a zero velocity.

\begin{figure}[t]
\centering
{\includegraphics[width=0.9\columnwidth]{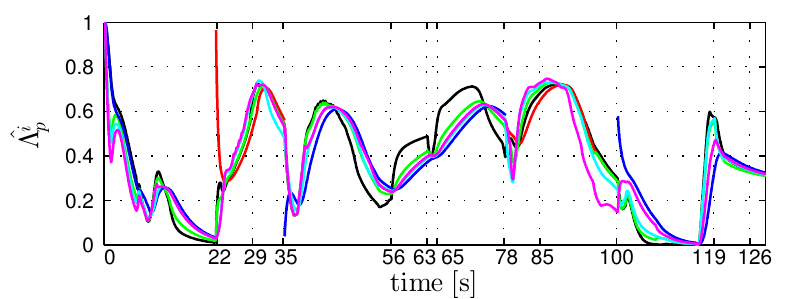}}
\caption{Estimation of the \primeTraveler{} traveling efficiency of all the six robots, whenever they are currently not a \primeTraveler{} (see~\eqref{eq:Lambda_estimate}). The color scheme for the robots is as in~\cref{fig:experimentPaths}.}
\label{fig:Lambda_propagation}
\end{figure}

\section{Conclusions}\label{sec:conclusions}

In this paper we presented a novel distributed and decentralized control strategy that enables simultaneous multi-target exploration while ensuring a time-varying connected topology in a 3D cluttered environment. We provided a detailed description of our algorithm which effectively exploits presence of \emph{four} dynamic roles for the robots in the group. In particular, a \connector{} is a robot with no active target, an \anchor{} a robot close to its desired location, and all other robots are instead moving towards their targets. Presence of at most one \primeTraveler{}, holding a leader virtue, is always guaranteed. All other robots (\secondaryTravelers{}) are bound to adapt their motion plan so as to facilitate the \primeTraveler{} visiting task. This feature ensures that the \primeTraveler{} is always able to reach its target, and thus ultimately allows to conclude completeness of the exploration strategy.
The scalability and effectiveness of the proposed method was shown by presenting a complete and extensive set of simulative results, as well as an experimental validation with real robots for further demonstrating the practical feasibility of our approach.

As future development, we plan to modify the control of the \connectors{} in order to actively improve the connectivity (e.g., moving towards the center of the group or towards the closest \explorer{}) and therefore decrease the overall completion time even more. Another extension could include imposing temporal targets that expire before any robot can possibly reach them. In our framework this could be easily achieved by letting the corresponding \primeTraveler{} or \secondaryTraveler{} switching into a \connector{} whenever a target expires, for then automatically starting to explore the next target (if any). 

An important direction worth of investigation would also be the possibility to (explicitly) deal with errors or uncertainties in the relative position measurements (w.r.t.~robots and obstacles) needed by the algorithm. Indeed, the presented results rely on an accurate measurement of robot and obstacle relative positions obtained by means of an external motion capture system. 

Another improvement could address the distributed election of the \primeTraveler{} as was already discussed in~\cref{sec:election}. Indeed, while the adopted flooding approach does not require presence of a centralized planning unit, it still needs to take into account information from all robots. It would obviously be preferable to only exploit information available to the robot itself and its $1$-hop neighbors. This could be achieved by leveraging some (suitable variant of the) consensus algorithm as done for the decentralized propagation of the traveling efficiency of the current \primeTraveler{}. 
More generally, it might also be beneficial to improve the election of the \primeTraveler{} by considering other criteria than the Euclidean distance w.r.t.~a target which may not always result in an `optimal' group motion (e.g., when obstacles, such as a wall, are present between the next \primeTraveler{} and the target).
The election could for instance choose the robot with the highest chance of reducing even further the completion time, e.g., based on the current motion of the group or direction of the majority of current targets of all \secondaryTravelers{}.

Finally, it would be interesting to obtain an analytical upper bound of the total exploration time for our approach, although, in our opinion, deriving such a bound is unfortunately not so straightforward. Clearly, the considered multi-target exploration scenario has some analogies with the multiple traveling salesman problem~\citep{2006-Bek}, where a certain number $N$ of agents are asked to find a set of $N$ shortest routes through a set of $m$ cities and return back to the start. Nevertheless, an analysis based on the multiple traveling salesman problem would not easily extend to our case because of the constraint of continuous connectivity maintenance.





\appendix

\section{Appendix}\label{app:connectivity}

For the sake of completeness and readablity, we will recap here the main features of the connectivity maintenance algorithm presented in~\cite{2013e-RobFraSecBue} with some small changes in the variable names. We start by defining $d_{ij}=\|q_i-q_j\|$ as the distance between two robot positions $q_i$ and $q_j$, and $d_{ijo}=\min_{\varsigma\in[0,1],o\in\mathcal{O}}\|q_i + \varsigma (q_j - q_i) - o\|$ as the closest distance from the line of sight between robot $i$ and $j$ to any obstacle. 
 
 The main conceptual steps behind the computation of $f_i^\lambda$ can be summarized as follows:
 \begin{enumerate} 
 \item Define an auxiliary weighted graph $\mathcal{G}^\lambda(t)=(\mathcal{V},\mathcal{E}^\lambda,W)$, where $W$ is a symmetric nonnegative $n\times n$ matrix whose entries $W_{ij}$ represent the weight of the edge $(i,j)$ and $(i,j)\in\mathcal{E}^\lambda\Leftrightarrow W_{ij}>0$. 
 \item Design every weight $W_{ij}$ as a \emph{smooth} function of the robot positions $q_i$, $q_j$ and of the obstacle points surrounding $q_i$ and $q_j$, with the property that $W_{ij}=0$ if and only if at least one of the following conditions is verified: 
 \begin{enumerate}
 \item the maximum sensing range $R_s$ is reached: $d_{ij} \ge R_s$,
 \item the minimum desired distance to obstacles $R_o$ is reached (where $R_o<R_m$): $d_{ijo} \le R_o$;
 \item the minimum desired inter-robot distance $R_c$ is reached: $d_{ik} \le R_c$ for at least one $k\neq i$.
 \end{enumerate}
 \item Compute $f_i^\lambda$ as the negative gradient of a potential function $V^\lambda(\lambda_2)$ that grows unbounded when
 $\lambda_2\to\lambda_2^\text{min}$ from above, where $\lambda_2$ is the second smallest eigenvalue of the (symmetric and positive semi-definite) Laplacian matrix
 $L=\operatorname{diag}_{i=1}^{n}(\sum_{j=1}^{n}W_{ij})-W$,
 and $\lambda_2^\text{min}$ is a non-negative parameter. This eigenvalue $\lambda_2$ is often also called Fiedler eigenvalue.
 \end{enumerate}
 
 It is known from graph theory that a graph is connected if and only if the Fiedler eigenvalue of its Laplacian is positive~\citep{1973-Fie}.
 If $\mathcal{G}^\lambda(0)$ is connected, and in particular $\lambda_2(0)>\lambda_2^\text{min}$, then under the action of $f_i^\lambda$ the
 value of $\lambda_2(t)$ can never decrease below $\lambda_2^\text{min}$ and therefore $\mathcal{G}^\lambda(t)$ always stays connected.
 
 From a formal point of view the anti-gradient of $V^\lambda$ for the $i$-th robot takes the form
 \begin{equation}\label{eq:first_step}
 f_i^\lambda=-\parder{V^\lambda(\lambda_2)}{q_i}=-\totder{V^\lambda}{\lambda_2}\parder{\lambda_2}{q_i}.
 \end{equation}
 Moreover, if the formal expression of $V^\lambda$ and $W$ are known then \eqref{eq:first_step} can be analytically computed via the expression~\citep{2010-YanFreGorLynSriSuk}, 
 \begin{equation}\label{eq:magic_formula}
 \parder{\lambda_2}{q_i}=\sum_{j\in\calN_i}\parder{W_{ij}}{q_i}(\nu_{2_i}-\nu_{2_j})^2,
 \end{equation}
 where $\nu_{2_i}$ is the $i$-th component of the normalized eigenvector of $L$ associated to $\lambda_2$.
 
 In order to have a fully decentralized computation of $f_i^\lambda$, the robots perform a distributed estimation of both $\lambda_2(t)$ and
 $\nu_{2_i}(t)$, for all $i=1,\ldots,N$, as shown in~\cite{2010-YanFreGorLynSriSuk}. In~\cite{2013e-RobFraSecBue} the authors finally prove
 the passivity (and then the stability) of the system w.r.t. the pair $(f_i,v_i)$ for all $i=1,\ldots,N$, as well as the possibility to compute the connectivity force $ f_i^\lambda$ in~(\ref{eq:first_step}) in a completely decentralized way.

\bibliography{bibAlias,bibMain,bibNew,bibAF}
%
\end{document}